\newtheorem{lemma}{Lemma}
\newtheorem{proposition}{Proposition}
\newtheorem{theorem}{Theorem}
\newcommand{\argmax}{\mathop{\rm argmax}}
\newcommand{\iprod}[2]{\langle #1, #2 \rangle}
\newcommand{\lmin}{\lambda_{\min}}
\newcommand{\norm}[1]{\left\|#1\right\|}
\newcommand{\mge}{\succeq}
\newcommand{\mi}{{-1}}
\newcommand{\R}{\mathbb{R}}\newcommand{\B}{\mathbb{B}}\newcommand{\Id}{\mathbf{I}}
\newcommand{\zerovec}{\mathbf{0}}
\newcommand{\onevec}{\mathbf{1}}
\newcommand{\defeq}{:=}
\newcommand{\secref}[1]{Section~\ref{#1}}
\newcommand{\lemref}[1]{Lemma~\ref{#1}}
\newcommand{\proref}[1]{Proposition~\ref{#1}}
\newcommand{\thmref}[1]{Theorem~\ref{#1}}
\newcommand{\assref}[1]{Assumption~\ref{#1}}
\newcommand{\eqnref}[1]{(\ref{#1})}
\newcommand{\algref}[1]{Algorithm~\ref{#1}}
\newcommand{\Prob}{\mathbb{P}}
\newcommand{\E}{\mathbb{E}}
\newcommand{\V}{\mathbb{V}}
\newcommand{\tr}[1]{\textrm{tr}\left(#1\right)}
\newcommand{\Evt}{\mathcal{E}}
\newcommand{\EvtG}{\Evt_G}
\newcommand{\EvtD}{\Evt_\Delta}
\newcommand{\EvtX}{\Evt_X}
\newcommand{\lihong}[1]{[[\textbf{LL:} #1]]}
\renewcommand{\lihong}[1]{}
\newcommand{\RN}[1]{%
  \textup{\uppercase\expandafter{\romannumeral#1}}%
}
\newtheorem{assumption}{Assumption}
\icmltitlerunning{Generalized Linear Contextual Bandits}
\begin{document} 

\twocolumn[
\icmltitle{Provably Optimal Algorithms for Generalized Linear Contextual Bandits}



\icmlsetsymbol{equal}{*}

\begin{icmlauthorlist}
\icmlauthor{Lihong Li}{msr}
\icmlauthor{Yu Lu}{yale}
\icmlauthor{Dengyong Zhou}{msr}
\end{icmlauthorlist}

\icmlaffiliation{msr}{Microsoft Research, Redmond, WA 98052}
\icmlaffiliation{yale}{Department of Statistics, Yale University, New Haven, CT, USA}

\icmlcorrespondingauthor{Lihong Li}{lihongli@microsoft.com}
\icmlcorrespondingauthor{Yu Lu}{yu.lu@yale.edu}
\icmlcorrespondingauthor{Dengyong Zhou}{denzho@microsoft.com}

\icmlkeywords{contextual bandit, exploration, generalized linear model}

\vskip 0.3in
]



\printAffiliationsAndNotice{}  

\begin{abstract}
Contextual bandits are widely used in Internet services from news recommendation to advertising, and to Web search. Generalized linear models (logistical regression in particular) have demonstrated stronger performance than linear models in many applications where rewards are binary. However, most theoretical analyses on contextual bandits so far are on linear bandits.  In this work, we propose an upper confidence bound based algorithm for generalized linear contextual bandits, which achieves an $\tilde{O}(\sqrt{dT})$ regret over $T$ rounds with $d$ dimensional feature vectors. This regret matches the minimax lower bound, up to logarithmic terms, and improves on the best previous result by a $\sqrt{d}$ factor, assuming the number of arms is fixed.
A key component in our analysis is to establish a new, sharp finite-sample confidence bound for maximum-likelihood estimates in generalized linear models, which may be of independent interest.  We also analyze a simpler upper confidence bound algorithm, which is useful in practice, and prove it to have optimal regret for certain
cases.
\end{abstract} 

\section{Introduction}
Contextual bandit problems are originally motivated by applications in clinical trials~\cite{woodroofe1979one}. When a standard treatment and a new treatment are available for a certain disease, the doctor needs to decide, in a sequetial manner, which of them to use based on the patient's profiles such as age, general physical status or medicine history. With the development of modern technologies, contextual bandit problems have more applications, especially in web-based recommendation, advertising and search~\cite{agarwal2009online, li2010contextual, liunbiased}. In the problem of personalized news recommendation, the website must recommend news articles that are most interesting to users that visit the website.  The problem is especially challenging for breaking news, as little data are available to make good prediction about user interest.
A trade-off naturally occurs in this kind of sequential decision making problems.  One needs to balance \emph{exploitation}---choosing actions that performed well in the past---and \emph{exploration}---choosing actions that may potentially give better outcomes.

In this paper, we study the following stochastic, $K$-armed contextual bandit problem. Suppose at each of the $T$ rounds, an agent is presented with a set of $K$ actions, each of which is associated with a context (a $d$-dimensional feature vector). By choosing an action based on the rewards obtained from previous rounds and on the contexts, the agent will receive a stochastic reward generated from some unknown distribution conditioned on the context and the chosen action. The goal of the agent is to maximize the expected cumulative rewards over $T$ rounds. The most studied model in contextual bandits literature is the linear model \cite{auer2003using,dani2008stochastic,rusmevichientong2010linearly,chu2011contextual,abbasi2011improved}, in which the expected rewards at each round is a linear combination of features in the context vector. The linear model is theoretically convenient to work with. However, in practice, we usually have binary rewards (click or not, treatment working or not).  Logistic regression model based algorithms have been shown to have substantial improvements over linear models~\cite{liunbiased}.  We therefore consider generalized linear models (GLM) in the contextual bandit setting, in which linear, logistic and probit regression serve as three important special cases. 

The celebrated work of \citet{lai1985asymptotically} first introduces the upper confidence bound (UCB) approach to efficient exploration. Later, the idea of confidence bound has been successfully applied to many stochastic bandits problems, from $K$-arm bandits problems \cite{auer2002finite, bubeck2012regret} to linear bandits~\cite{auer2003using,abbasi2011improved}. UCB-type algorithms are both efficient and provable optimal in $K$-arm bandits and $K$-armed linear bandits. However, most study are limited to the linear case. While some UCB-type algorithms using GLMs perform well empirically~\cite{liunbiased}, there is little theoretical study of them. A natural question arises: can we find an efficient algorithm to achieve the optimal convergence rate for generalized linear bandits?

\paragraph{Our Contributions} In this paper, we propose a GLM version of the UCB algorithm called SupCB-GLM that achieves a regret over $T$ rounds of order $\tilde{O}(\sqrt{dT})$. This rate improves the state-of-the-art results of \citet{filippi2010parametric} by a $\sqrt{d}$ factor, assuming the number of actions is fixed.  Moreover, it matches the GLM bandits problem's minimax lower bound indicated by the linear bandits problem and thus is optimal. SupCB-GLM  is inspired by the seminal work of \citet{auer2003using}, which introduced a technique to construct independence samples in linear contextual bandits.  A key observation in proving this result is that the $\ell_2$ confidence ball of the unknown parameter is insufficient to calculate a sharp upper confidence bound, yet what we need is the confidence interval in \emph{all} directions. Thus, we prove a finite sample normality type confidence bound for the maximum likelihood estimator of GLM. To the best of our knowledge, this is the first non-asymptotic normality type result for the GLM and might be of its own theoretical value. We also analyze a simple version of UCB algorithm called UCB-GLM that is widely used in practice. We prove it also achieves the optimal regret bound under a reasonable assumption. These results shed light on explaining the good empirical performance of GLM bandits in practice. 

\paragraph{Related Work} The study of GLM bandits problem goes back at least to \citet{sarkar1991one},
who considered \emph{discounted} regrets rather than cumulative regerts. They prove that a myopic rule without exploration is \emph{asymptotically} optimal. Recently,  \citet{filippi2010parametric} study the same stochastic GLM bandit problem considered here. They propose the GLM-UCB algorithm, similar to our \algref{alg: UCB}, which achieves a regret of ${\small \tilde{O}(d\sqrt{T})}$ after $T$ rounds. However, as we believe the optimal regret for stochastic GLM bandits should be the same as linear case when the number of actions is small, their rates misses a $\sqrt{d}$ term than the optimal rates.

Another line of research focuses on using EXP-type algorithms, which can be applied to almost any model classes~\cite{auer2002nonstochastic}. These algorithms, which choose actions using a carefully randomized policy, use importance sampling to reduce a bandit problem to its full-information analogue.
Later variants of the EXP4 algorithm~\cite{beygelzimer2010contextual,agarwal2014taming} give an $\tilde{O}(\sqrt{dKT})$ regret that is near-optimal with respect to $T$. However, these regret bounds have a $\sqrt{K}$ dependence. Moreover, these algorithms can be expensive to run: they either have a computational complexity exponential in $d$ for our GLM case, or need to make a large number of calls to a nontrivial optimization oracle.

\paragraph{Organization} 
\secref{sec:probsetting}  introduces the generalized linear bandit problem.  \secref{sec:glm} gives a brief review of the statistical properties of generalized linear model, and gives a sharp non-asymptotic normality-type result for GLM parameter estimation which can be of independent value.  With this tool, \secref{sec:alg} presents our algorithms and the main theoretical results.  \secref{sec:discussions} concludes the paper with further discussions, including several open problems.
All proofs are given in the supplementary materials.

\paragraph{Notations} 
For a vector $x \in \R^{d}$, we use $\norm{x}$ to denote its $\ell_2$- norm and $x'$ its transpose. 
$\B^d\defeq\{x\in\R^d ~:~ \norm{x}\le1\}$ is the unit ball centered at the origin.  The weighted $\ell_2$-norm associated with a positive-definite matrix $A$ is defined by $\norm{x}_A \defeq \sqrt{x' Ax}$. The minimum and maximum singular values of a matrix $A$ are written as $\lmin(A)$ and $\norm{A}$, respectively.  The trace of a matrix $A$ is $\tr{A}$.  For two symmetric matrices $A$ and $B$ of the same dimensions, $A \mge B$ means that $A-B$ is positive semi-definite.   For a real-valued function $f$, we use $\dot{f}$ and $\ddot{f}$ to denote its first and second derivatives.  Finally, $[n] \defeq \{1,2,\ldots,n\}$.

\section{Problem Setting} \label{sec:probsetting} 

We consider the stochastic $K$-armed contextual bandit problem.  Let $T$ be the number of total rounds. At round $t$, the agent observes a context consisting of a set of $K$ feature vectors, $\{x_{t,a} \mid a\in[K]\} \subset \R^d$, which is drawn IID from an unknown distribution $\nu$, with $\norm{x_{t,a}} \le 1$.  
Each feature vector $x_{t,a}$ is associated with an unknown stochastic reward  $y_{t,a} \in [0,1]$.  The agent selects one action, denoted $a_t$, and observes the corresponding reward $y_{t,a_t}$.  Finally, we make a regularity assumption about the distribution $\nu$: there exists a constant $\sigma_0 > 0$ such that $\lmin(\E[\frac{1}{K}\sum_{a\in[K]}x_{t,a}x_{t,a}']) \ge \sigma_0^2$ for all $t$.

In this paper, we are concerned with the generalized linear model, or GLM, in which there is an unknown $\theta^* \in \R^d$ and a fixed, strictly increasing \emph{link function} $\mu: \R \to \R$ such that $\E[Y \mid X] = \mu(X'\theta^*)$, where $X$ is the chosen action's feature and $Y$ the corresponding reward.
One can verify that linear and logistic models are special cases of GLM with $\mu(x)=x$ and $\mu(x)=1/(1+e^{-x})$, respectively. 

The agent's goal is to maximize the cumulative expected rewards over $T$ rounds. Suppose the agent takes action $a_t$ at round $t$. Then the agent's strategy can be evaluated by comparing its expected reward to the best expected reward. To do so, define the optimal action at round $t$ by $a_t^*=\argmax_{a \in [K]} \mu(x_{t,a}'\theta^*)$. Then, the agent's total regret of following strategy $\pi$ can be expressed as follows
\vspace{-1mm}
\begin{equation*}
R_T(\pi) \defeq \sum_{t=1}^{T} \left( \mu(x_{t,a_t^*}'\theta^*) - \mu(x_{t,a_t}'\theta^*) \right).
\end{equation*}
Note that $R_T(\pi)$ is in general a random variable due to the possible randomness in $\pi$. Denote by $X_{t}=x_{t,a_t}$, $Y_{t}=y_{t,a_t}$, and our model can be written as 
\begin{equation} \label{eq:model1}
Y_{t} = \mu(X_t'\theta^*) + \epsilon_t\,,
\end{equation}
where $\{\epsilon_t, t \in [T]\}$ are independent zero-mean noise. Here, $X_t$ is a random variable because the agent chooses current action based on previous rewards. Formally, we assume there is an increasing sequence of sigma fields $\{\mathcal{F}_{n}\}$ such that 
$\epsilon_t$ is $\mathcal{F}_{t}$-measurable with $\E \left[\,\epsilon_t \mid \mathcal{F}_{t-1}\,\right]=0$.  An example of $\mathcal{F}_{n}$ will be the sigma-field generated by $\{X_1,Y_1,\ldots,X_n, Y_n\}$. Also, we assume the noise $\epsilon_t$ is sub-Gaussian with parameter $\sigma$, where $\sigma$ is some positive, universal constant; that is, for all $t$,
\begin{equation} \label{eq:glm2}
\E \left [\, e^{\lambda \epsilon_t} \mid \mathcal{F}_{t-1}\,\right] \le e^{\lambda^2 \sigma^2/2}.
\end{equation}
In practice, when we have bounded reward $Y_t \in [0,1]$,
the noise $\epsilon_t$ is also bounded and hence satisfies \eqnref{eq:glm2} with some appropriate $\sigma$ value. In addition to the boundedness assumption on the rewards and feature vectors, we also need the following assumption on the link function $\mu$. 

\begin{assumption} \label{ass:kappa}
$\kappa := \inf_{\{\norm{x}\le 1,\,\,\norm{\theta-\theta^*} \le 1\}} \dot{\mu}(x'\theta)  > 0 $.
\end{assumption}

As we shall see in \secref{sec:glm}, the asymptotic normality of maximum-likelihood estimates implies the necessity of this assumption. Note that this assumption is weaker than Assumption 1 in \citet{filippi2010parametric}, as it only requires to control the \emph{local} behavior of $\dot{\mu}(x'\theta)$ near $\theta^*$. 

\begin{assumption} \label{ass:smooth}
$\mu$ is twice differentiable. Its first and second order derivatives are upper-bounded by $L_{\mu}$ and $M_\mu$, respectively.
\end{assumption}

It can be verified that \assref{ass:smooth} holds for the logistic link function, where we may choose $L_{\mu}=M_\mu=1/4$.

\section{Generalized Linear Models} \label{sec:glm}

To motivate the algorithms proposed in this paper, we first briefly review the classical likelihood theory of generalized linear models. In the canonical generalized linear model \cite{mccullagh1989generalized}, the conditional distribution of $Y$ given $X$ is from the exponential family, and its density, parameterized by $\theta\in\Theta$, can be written as
\begin{equation}
\Prob(Y \mid X) = \exp\left \{\frac{Y X'\theta^* - m(X'\theta^*)}{g(\eta)} + h(Y,\eta) \right\}. \label{eq:glm1}
\end{equation}
Here, $\eta \in \R^+$ is a known scale parameter; $m$, $g$ and $h$ are three normalization functions mapping from $\R$ to $\R$. The exponential family \eqnref{eq:glm1} is a very broad family of distributions including the Gaussian, binomial, Poisson, gamma and inverse-Gaussian distributions. It follows from standard properties of exponential families \cite{brown1986fundamentals} that $m$ is infinitely differentiable satisfying $\dot{m}(X'\theta^*)=\E [\,Y \mid X\,] = \mu(X'\theta^*)$ and $\ddot{m}(X'\theta^*)=\V(Y \mid X)$. It can be checked that the data generated from \eqnref{eq:glm1} automatically satisfies the sub-Gaussian condition \eqnref{eq:glm2}.

Suppose we have independent samples of $Y_1, Y_2, \ldots, Y_n$ condition on $X_1,X_2,\ldots,X_n$. The log-likelihood function of $\theta$ under model \eqnref{eq:glm1} is
\begin{eqnarray*}
\log \ell(\theta) &=& \sum_{t=1}^n \left[\frac{Y_t X_t'\theta - m(X_t'\theta)}{v(\eta)} + c(Y_t,\eta)\right] \\
&=&
\frac{1}{v(\eta)} \sum_{t=1}^{n} \left[ Y_t X_t'\theta - m(X_t'\theta) \right] + \mathrm{constant}\,.
\end{eqnarray*}
Consequently, the maximum likelihood estimate (MLE) may be defined by
\[
\hat{\theta}_n \in \argmax_{\theta\in\Theta} \sum_{t=1}^{n} \left[ Y_t X_t'\theta - m(X_t'\theta) \right]\,.
\]
From classical likelihood theory~\cite{lehmann1998theory}, we know that when the sample size $n$ goes to infinity, the MLE $\hat{\theta}_n$ is asymptotically normal, that is,
$\hat{\theta}_n - \theta^* \to \mathcal{N}(0, \mathcal{I}^{-1}_{\theta^*})$, where $\mathcal{I}_\theta= \sum_{t=1}^{n} \dot{\mu}(X_t'\theta)X_tX_t'$ is the Fisher Information Matrix. Note that if $\dot{\mu}(X_t'\theta^*) \to 0$, the asymptotic variance of $x'\hat{\theta}$ can go to infinity for some $x$. This suggests the necessity of \assref{ass:kappa}.

As we will see later, the normality result is crucial in our regret analysis of GLM bandits. However, to the best of our knowledge, there is no non-asymptotic normality results of the MLE for GLM. In the following, we present a finite-sample version of the classical asymptotic normality results, which can be of independent interest.
 
\begin{theorem} \label{thm:glm_ucb}
Define $V_n = \sum_{t=1}^{n} X_t X_t'$, and let $\delta>0$ be given.   Furthermore, assume
that
\begin{equation}
\lmin(V_n) \ge  \frac{512 M_\mu^2 \sigma^2}{\kappa^4} \left(d^2 + \log\frac{1}{\delta}\right)\,.
\label{eq:lmin1}
\end{equation} 
Then, with probability at least $1-3\delta$,  the maximum-likelihood estimator satisfies, for any $x \in \R^d$, that
\begin{equation} \label{eq:normality}
|x'(\hat{\theta}_n-\theta^*)| \le \frac{3\sigma}{\kappa} \sqrt{\log(1/\delta)} \norm{x}_{V_n^{-1}}\,.
\end{equation}
\end{theorem}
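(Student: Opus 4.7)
My approach reduces \eqref{eq:normality} to three ingredients: an algebraic identity for $\hat\theta_n-\theta^*$, a \emph{localization} step that confines $\hat\theta_n$ to the unit ball around $\theta^*$, and a \emph{directional} sub-Gaussian concentration bound. The MLE satisfies the first-order optimality condition $\sum_{t=1}^n(Y_t-\mu(X_t'\hat\theta_n))X_t=0$; substituting $Y_t=\mu(X_t'\theta^*)+\epsilon_t$ and applying the mean-value theorem to $\mu$ along the segment $[\theta^*,\hat\theta_n]$ gives
\[
G_n(\hat\theta_n-\theta^*)=Z_n,\quad G_n\defeq\sum_{t=1}^n\dot\mu(X_t'\bar\theta_t)X_tX_t',\quad Z_n\defeq\sum_{t=1}^n\epsilon_tX_t,
\]
with $\bar\theta_t\in[\theta^*,\hat\theta_n]$. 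On the event $\|\hat\theta_n-\theta^*\|\le 1$, Assumption~\ref{ass:kappa} forces $\dot\mu(X_t'\bar\theta_t)\ge\kappa$ and hence $G_n\mge\kappa V_n$, so the identity becomes $\hat\theta_n-\theta^*=G_n^{-1}Z_n$.

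To establish localization I study the negative-log-likelihood gradient $F_n(\theta)\defeq\sum_t(\mu(X_t'\theta)-Y_t)X_t$. Reusing the Taylor argument, for any $\theta$ on the sphere $\|\theta-\theta^*\|=1$,
\[
(\theta-\theta^*)'F_n(\theta)\ge\kappa\,(\theta-\theta^*)'V_n(\theta-\theta^*)-|(\theta-\theta^*)'Z_n|\ge\kappa\lmin(V_n)-\sup_{\|v\|=1}|v'Z_n|.
\]
I bound $\sup_{\|v\|=1}|v'Z_n|$ by covering $\B^d$ with a $1/2$-net (of size at most $5^d$): each $v'Z_n=\sum_t(v'X_t)\epsilon_t$ is a sub-Gaussian martingale sum with proxy $\le\sigma^2\|V_n\|$, so a union bound yields a tail of order $\sigma\sqrt{\|V_n\|(d+\log(1/\delta))}$ with probability $\ge 1-\delta$. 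The hypothesis \eqref{eq:lmin1} is calibrated so that $\kappa\lmin(V_n)$ strictly dominates this supremum on the sphere, which, combined with the convexity of the negative log-likelihood and the fact $F_n(\hat\theta_n)=0$, forces $\hat\theta_n$ into the open unit ball around $\theta^*$.

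For the directional bound, write $x'(\hat\theta_n-\theta^*)=w'Z_n$ with $w\defeq G_n^{-1}x$. Were $w$ deterministic, $w'Z_n$ would be sub-Gaussian with variance proxy $\sigma^2 w'V_n w\le\sigma^2\kappa^{-1}w'G_n w=\sigma^2\kappa^{-1}x'G_n^{-1}x\le\sigma^2\kappa^{-2}\|x\|_{V_n^{-1}}^2$ (using $G_n\mge\kappa V_n$ twice), and scalar sub-Gaussian concentration would give $|w'Z_n|\le(\sqrt 2\,\sigma/\kappa)\sqrt{\log(2/\delta)}\|x\|_{V_n^{-1}}$, matching \eqref{eq:normality} after tracking constants. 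To handle the data-dependence of $w$, I would linearize at $\theta^*$ via a second-order Taylor expansion
\[
J_n(\hat\theta_n-\theta^*)=Z_n-R_n,\quad J_n\defeq\sum_t\dot\mu(X_t'\theta^*)X_tX_t',\quad R_n=\tfrac12\sum_t\ddot\mu(X_t'\xi_t)(X_t'(\hat\theta_n-\theta^*))^2X_t,
\]
controlling $\|R_n\|_{V_n^{-1}}$ by $M_\mu$ times the squared localization radius from the previous step, and applying a conditional (on the design sigma-field) sub-Gaussian inequality to $x'J_n^{-1}Z_n$ in the single direction $x$. Together with a further $\delta$-event for this concentration, the total failure probability is $3\delta$.

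\textbf{Main obstacle.} The delicate part is the third step: the naive Cauchy--Schwarz bound $|x'G_n^{-1}Z_n|\le\|x\|_{G_n^{-1}}\|Z_n\|_{G_n^{-1}}$ incurs an avoidable $\sqrt d$ factor through $\|Z_n\|_{V_n^{-1}}^2=\Theta(\sigma^2 d\log(1/\delta))$. Obtaining the clean $\sqrt{\log(1/\delta)}$ dependence exploits that only a \emph{single} direction $x$ is tested, and requires the quadratic-remainder control delivered by the $d^2$ term in \eqref{eq:lmin1}; this is precisely why that term (rather than $d$) appears in the hypothesis on $\lmin(V_n)$.
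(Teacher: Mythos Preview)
Your overall architecture---localize $\hat\theta_n$ near $\theta^*$, then linearize at $\theta^*$ and control the quadratic remainder so that only a single-direction sub-Gaussian bound is needed---matches the paper's two-step proof. But there is a genuine gap in your localization argument.

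You bound $\sup_{\|v\|=1}|v'Z_n|$ via a net over the \emph{Euclidean} unit sphere. For fixed $v$ with $\|v\|=1$, the variance proxy of $v'Z_n=\sum_t(v'X_t)\epsilon_t$ is $\sigma^2\sum_t(v'X_t)^2=\sigma^2 v'V_n v\le\sigma^2\|V_n\|$, so the union bound over the net indeed gives a tail of order $\sigma\sqrt{\|V_n\|(d+\log(1/\delta))}$. Your localization then requires
\[
\kappa\,\lmin(V_n)\;>\;c\,\sigma\sqrt{\|V_n\|\bigl(d+\log(1/\delta)\bigr)},
\]
i.e.\ a lower bound on $\lmin(V_n)^2/\|V_n\|$. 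The hypothesis \eqref{eq:lmin1} controls only $\lmin(V_n)$ and places no restriction on $\|V_n\|$; the design can be arbitrarily ill-conditioned (many observations in one direction, barely enough in the others), so $\|V_n\|/\lmin(V_n)$ can be made as large as one likes while \eqref{eq:lmin1} continues to hold. Your localization step therefore does not follow from the stated assumption.

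The repair---and this is what the paper does---is to run the net argument in the $V_n^{-1}$-weighted geometry rather than the Euclidean one. Write $\|Z_n\|_{V_n^{-1}}=\sup_{\|a\|\le 1} a'V_n^{-1/2}Z_n$; for each $a$ on the sphere, $a'V_n^{-1/2}Z_n=\sum_t(a'V_n^{-1/2}X_t)\epsilon_t$ has variance proxy $\sigma^2 a'V_n^{-1/2}V_n V_n^{-1/2}a=\sigma^2$, \emph{uniformly} and with no reference to $\|V_n\|$. A $1/2$-net then yields $\|Z_n\|_{V_n^{-1}}\le 4\sigma\sqrt{d+\log(1/\delta)}$ with probability at least $1-\delta$. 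Combining this with the mean-value estimate in the same norm, $\|G(\theta)-G(\theta^*)\|_{V_n^{-1}}\ge\kappa\sqrt{\lmin(V_n)}\,\|\theta-\theta^*\|$ for $\theta$ in the unit ball around $\theta^*$, gives $\|\hat\theta_n-\theta^*\|\le 1$ as soon as $\lmin(V_n)\ge 16\sigma^2(d+\log(1/\delta))/\kappa^2$, a condition on $\lmin(V_n)$ alone that is implied by \eqref{eq:lmin1}.

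A smaller remark on your third step: the claim ``$\|R_n\|_{V_n^{-1}}\le M_\mu\times(\text{localization radius})^2$'' is not right as written. Writing $R_n=\tfrac12 E\Delta$ with $E=\sum_t\ddot\mu(\cdot)(X_t'\Delta)X_tX_t'$, one has $\|V_n^{-1/2}EV_n^{-1/2}\|\le M_\mu\|\Delta\|$ and hence $\|R_n\|_{V_n^{-1}}\le\tfrac{M_\mu}{2}\|\Delta\|\,\|\Delta\|_{V_n}$; the extra factor $\|\Delta\|_{V_n}\le\kappa^{-1}\|Z_n\|_{V_n^{-1}}$ is of order $\sqrt{d}$, not $O(1)$. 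This is still enough---the $d^2$ in \eqref{eq:lmin1} is precisely what absorbs it---and it is equivalent to the paper's route, which bounds the operator norm $\|H^{-1/2}EH^{-1/2}\|$ (with $H=J_n$) and uses the resolvent identity $(H+E)^{-1}=H^{-1}-H^{-1}E(H+E)^{-1}$ to split $x'(\hat\theta_n-\theta^*)$.
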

This theorem characterizes the behavior of MLE on \emph{every} direction. It implies that $x'(\hat{\theta}_n-\theta^*)$ has a sub-Gaussian tail bound for any $x \in \R^d$. It also provides a rigorous justification of the asymptotic upper confidence bound derived heuristically by \citet[Section~4.2]{filippi2010parametric}.

The proof of the theorem is given in the appendix.  It consists of two main steps, as is typical for proving normality-type results of MLEs~\citep{van2000asymptotic}.  We first show the $n^{-1/2}$-consistency of $\hat{\theta}$ to $\theta^*$. Then, by using a second-order Taylor expansion or Newton-step, we can prove the desired normality of $\hat{\theta}$.

The condition \eqnref{eq:lmin1} on $\lmin(V_n)$ is necessary for the consistency of estimating linear models~\cite{lai1982least,bickel2009simultaneous} and generalized linear models~\cite{fahrmeir1985consistency,chen1999strong}. It can be satisfied under mild conditions
such as the proposition below, which will be useful for our analysis.

\begin{proposition}
\label{pro:spectral}
Define $V_n=\sum_{t=1}^n X_t X_t'$, where $X_t$ is drawn iid from some distribution $\nu$ with support in the unit ball, $\B^d$.  Furthermore, let $\Sigma\defeq\E[X_tX_t']$ be the second moment matrix, and $B$ and $\delta>0$ be two positive constants.  Then, there exist positive, universal constants $C_1$ and $C_2$ such that $\lmin(V_n) \ge B$ with probability at least $1-\delta$, as long as
\begin{eqnarray*}
n &\ge& \left(\frac{C_1 \sqrt{d} + C_2 \sqrt{\log(1/\delta)}}{\lmin(\Sigma)}\right)^2 + \frac{2B}{\lmin(\Sigma)}\,.
\end{eqnarray*}
\end{proposition}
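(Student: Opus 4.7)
}

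The plan is to control $\lmin(V_n)$ by the spectral deviation of the empirical second-moment matrix from its mean. By Weyl's inequality applied to $V_n = n\Sigma + (V_n - n\Sigma)$,
\begin{equation*}
\lmin(V_n) \ge n\lmin(\Sigma) - \norm{V_n - n\Sigma}\,,
\end{equation*}
so it suffices to show that under the stated lower bound on $n$, with probability at least $1-\delta$ we have $\norm{V_n - n\Sigma} \le n\lmin(\Sigma) - B$.

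For the spectral deviation I would use a classical covering argument. Fix a $1/4$-net $\mathcal{N}$ of the unit sphere in $\R^d$ with $|\mathcal{N}| \le 9^d$; the standard approximation lemma for symmetric matrices gives
\begin{equation*}
\norm{V_n - n\Sigma} \le 2 \max_{v \in \mathcal{N}} \bigl|v'(V_n - n\Sigma) v\bigr|\,.
\end{equation*}
For each fixed $v \in \mathcal{N}$, the variables $Y_{t,v} \defeq (v'X_t)^2 - v'\Sigma v$ are iid, zero-mean, bounded in absolute value by $1$ (because $\norm{X_t}\le 1$), and have variance at most $\E[(v'X_t)^4] \le v'\Sigma v \le \norm{\Sigma} \le 1$. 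Bernstein's inequality therefore yields
\begin{equation*}
\Prob\!\left(\bigl|v'(V_n - n\Sigma)v\bigr| \ge s\right) \le 2\exp\!\left(-\frac{s^2/2}{n + s/3}\right)\,.
\end{equation*}
Taking a union bound over $\mathcal{N}$ and choosing $s$ proportional to $\sqrt{n(d + \log(1/\delta))} + (d + \log(1/\delta))$ gives, with probability at least $1-\delta$,
\begin{equation*}
\norm{V_n - n\Sigma} \le C \Bigl( \sqrt{n(d + \log(1/\delta))} + (d + \log(1/\delta)) \Bigr)
\end{equation*}
for a universal constant $C$.

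Combining with Weyl's inequality, the event $\lmin(V_n) \ge B$ is implied by
\begin{equation*}
n\lmin(\Sigma) - C\sqrt{n(d + \log(1/\delta))} - C(d + \log(1/\delta)) \ge B\,.
\end{equation*}
A standard quadratic-in-$\sqrt{n}$ calculation, using $\sqrt{d + \log(1/\delta)} \le \sqrt{d} + \sqrt{\log(1/\delta)}$ to obtain the separated form and using $\lmin(\Sigma) \le \norm{\Sigma} \le 1$ to absorb the lower-order sub-exponential term into the $2B/\lmin(\Sigma)$ slack, shows that this inequality is satisfied whenever
\begin{equation*}
n \ge \left(\frac{C_1 \sqrt{d} + C_2 \sqrt{\log(1/\delta)}}{\lmin(\Sigma)}\right)^2 + \frac{2B}{\lmin(\Sigma)}
\end{equation*}
for suitable universal $C_1, C_2$, which is exactly the stated condition.

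The only real technical nuisance is the constant bookkeeping in the last step: one must check that the sub-exponential correction $C(d + \log(1/\delta))$ from Bernstein, divided by $\lmin(\Sigma)$, is dominated by the quadratic $\sqrt{d}+\sqrt{\log(1/\delta)}$ term (this uses $\lmin(\Sigma)\le 1$, which holds because $X_t\in\B^d$), while the additive $2B/\lmin(\Sigma)$ piece absorbs the $B$ term in the Weyl bound. Everything else is a routine application of a net argument plus Bernstein's inequality.
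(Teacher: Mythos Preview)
Your argument is correct, but it follows a different route from the paper's. The paper first \emph{whitens} the data: setting $Z_t \defeq \Sigma^{-1/2}X_t$ and $U \defeq \Sigma^{-1/2}V_n\Sigma^{-1/2}$, it observes that the $Z_t$ are isotropic and sub-Gaussian with parameter at most $\lmin^{-1/2}(\Sigma)$ (since $\norm{Z_t}\le\lmin^{-1/2}(\Sigma)$), and then invokes Vershynin's off-the-shelf concentration theorem for matrices with sub-Gaussian isotropic rows to get $\lmin(U)\ge n - C_1\sigma^2\sqrt{nd} - \sigma^2 t\sqrt{n}$ directly; the bound on $\lmin(V_n)$ then follows from $\lmin(V_n)\ge\lmin(\Sigma)\lmin(U)$. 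You instead work with $V_n$ directly via Weyl's inequality and a net-plus-Bernstein argument on the bounded scalars $(v'X_t)^2$. Your approach is more self-contained (no black-box matrix concentration, no whitening) but pays for it with the extra sub-exponential term $C(d+\log(1/\delta))$, which you then have to absorb into the leading quadratic term using $\lmin(\Sigma)\le 1$; the paper's sub-Gaussian route sidesteps this term in the regime $\varepsilon\le 1$. Either way the final sufficient condition on $n$ has the same form after constant adjustments, and the quadratic-in-$\sqrt{n}$ cleanup is the same as the paper's \lemref{lem:quad-ineq}. One small wording fix: the sub-exponential correction is absorbed into the quadratic $(C_1\sqrt{d}+C_2\sqrt{\log(1/\delta)})^2/\lmin(\Sigma)^2$ term (via $\lmin(\Sigma)\le 1$), not into the $2B/\lmin(\Sigma)$ piece---the latter handles only the $B$ from the Weyl bound, as you say at the end.
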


\begin{proof}[Proof Sketch]
We give a proof sketch here, and the full proof is found in the appendix.
In the following, for simplicity, we will drop the subscript $n$ when there is no ambiguity.  Therefore, $V_n$ is denoted $V$ and so on.
We will need a technical lemma, which is an existing result in random matrix theory. The version we presented here is adapted from Equation (5.23) of Theorem 5.39 from \citet{vershynin2010introduction}.
\begin{lemma}
\label{lem:rmsv}
Let $A\in\R^{n\times d}$ be a matrix whose rows $A_i$ are independent sub-Gaussian isotropic random vectors in $\R^d$ with parameter $\sigma$, namely, $\E \exp(x' (A_i - \E A_i)) \le \exp(\sigma^2 \norm{x}^2/2)$ for any $x \in \R^d$.  Then, there exist positive, universal constants $C_1$ and $C_2$ such that, for every $t \ge 0$, the following holds with
probability at least $1-2\exp(-C_2t^2)$, where $\varepsilon = \sigma^2 (C_1 \sqrt{d/n} + t/\sqrt{n})$: $\left\|\frac{1}{n} A'A - \mathbf{I}_d \right\| \le \max\{\varepsilon, \varepsilon^2\}\,.$
\end{lemma}

Let $X$ be a random vector drawn from the distribution $\nu$.  Define $Z\defeq \Sigma^{-1/2}X$.  Then $Z$ is isotropic, namely, $\E[ZZ'] = \Id_d$.  Define $U=\sum_{t=1}^n Z_tZ_t' = \Sigma^{-1/2}V\Sigma^{-1/2}$.
From \lemref{lem:rmsv}, we have that, for any $t$, with probability at least $1-2\exp(-C_2t^2)$, $\lmin(U)\ge n - C_1 \sigma^2 \sqrt{nd} - \sigma^2 t \sqrt{n}$,
where $\sigma$ is the sub-Gaussian parameter of $Z$, which is upper-bounded by $\norm{\Sigma^{-1/2}} = \lmin^{-1/2}(\Sigma)$ (see, e.g., \citet{vershynin2010introduction}).  
We thus can rewrite the above inequality (which holds with probability $1-\delta$ as
\[
\lmin(U) \ge n - \lmin^{-1}(\Sigma) \left(C_1 \sigma^2 \sqrt{nd} + t \sqrt{n} \right) \,.
\]
This implies the following lower bound:
\[
\lmin(V) \ge \lmin(\Sigma) n - C_1 \sqrt{nd} - C_2 \sqrt{n \log(1/\delta)}\,.
\]
Finally, simple calculations show that the last expression is no less than $B$ as long as $n$ is no smaller than the expression stated in the proposition, finishing the proof.
\end{proof}

\section{Algorithms and Main Results} \label{sec:alg}

In this section, we are going to present two algorithms. While the first algorithm is computationally more efficient, the second algorithm has a provable optimal regret bound.

\subsection{Algorithm UCB-GLM} \label{sec:UCB-GLM}

The idea of upper confidence bounds (UCB) is highly effective in dealing with the exploration and exploitation trade-off in many parametric bandit problems, including $K$-arm bandits~\cite{auer2002finite} and linear bandits~\cite{abbasi2011improved,auer2003using,chu2011contextual,dani2008stochastic}. 
For the generalized linear model considered here, since $\mu$ is a strictly increasing function, our goal is equivalent to choosing $a \in [K]$ to maximize $x_{t,a}'\theta^*$ at round $t$. Suppose $\hat{\theta}_t$ is our current estimator of $\theta^*$ after round $t$. An exploitation action is to take the action that maximizes the estimated mean value, while an exploration action is to choose the one that has the largest variance. Thus, to balance exploitation and exploration, we can simply choose the action that maximizes the sum of estimated mean and variance, which can be interpreted as an upper confidence bound of $x_{t,a}'\hat{\theta}_t$.  This leads to the algorithm UCB-GLM (\algref{alg: UCB}).

\begin{algorithm}
\caption{UCB-GLM}\label{alg: UCB}
\vspace{0.05in}
\textbf{Input}: the total rounds $T$, tuning parameter $\tau$ and $\alpha$. \vspace{0.00in}

\textbf{Initialization}: randomly choose $a_t \in [K]$ for $t \in [\tau]$, set $V_{\tau+1}=\sum_{i=1}^{\tau} X_{t}X_{t}'$ \\ \vspace{0.05in}
\textbf{For} {$t=\tau+1,\tau+2,\ldots,T$} \textbf{do} \vspace{-0.15in}
 \begin{itemize}
 \item[1.] Calculate the maximum-likelihood estimator $ \hat{\theta}_t$ by solving the equation \vspace{-3mm}
 \begin{equation} \label{eq:algquasi}
 \sum_{i=1}^{t-1} (Y_i - \mu(X_i'\theta)) X_i =0
 \end{equation}\vspace{-5mm}
 \item[2.] Choose $a_t = \argmax_{a \in [K]} \left ( X_{t,a}'\hat{\theta}_t + \alpha \norm{X_{t,a}}_{V_{t}^{-1}} \right)$ \vspace{-5mm}
 \item[3.] Observe $Y_t$, let $X_t \leftarrow X_{t,a_t}$, $V_{t+1} \leftarrow V_t+X_{t}X_{t}'$
 \end{itemize} \vspace{-2mm}
\textbf{End For} \vspace{0.05in}
\end{algorithm}

UCB-GLM take two parameters. At the initialization stage, we randomly choose actions to ensure a unique solution of \eqnref{eq:algquasi}.  The choice of $\tau$ in the theorem statement follows from \proref{pro:spectral} with $B=1$.  It should be noted that the IID assumption about contextual (i.e., the distribution $\nu$) is only needed to ensure $V_{\tau+1}$ is invertable (similar to the first phase in the algorithm of \citet{filippi2010parametric}); the rest of our analysis does not depend on this stochastic assumption.  The same may be achieved by using regularization (see, e.g., \citet{abbasi2011improved}).
%
Another tuning parameter $\alpha$ is used to control the amount of exploration. The larger the $\alpha$ is, the more exploration will be used.

As mentioned earlier, the feature vectors $X_t$ depend on the previous rewards. Consequently, the rewards $\{Y_i, i \in [t]\}$ may not be independent given $\{X_i, i \in [t]\}$.  We instead use results on self-normalized martingales~\cite{abbasi2011improved}, together with a finite-time normality result like \thmref{thm:glm_ucb}, to prove the next theorem.

\begin{theorem} \label{thm:main1}
Fix any $\delta>0$.  There exists a universal constant $C>0$, such that if we run UCB-GLM with $\alpha=\frac{\sigma}{\kappa} \sqrt{\frac{d}{2} \log(1+2T/d) + \log(1/\delta)}$ and $\tau=C\sigma_0^{-2}(d+\log(1/\delta))$, then, with probability at least $1-2\delta$, the regret of the algorithm is upper bounded by
\begin{equation*}
R_T \le \tau + \frac{2 L_{\mu} \sigma d}{\kappa} \log \left( \frac{T}{d\delta}\right) \sqrt{T} \,.
\end{equation*}
\end{theorem}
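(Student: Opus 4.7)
The plan is to split the cumulative regret as $R_T \le \tau + \sum_{t=\tau+1}^T r_t$ with $r_t \defeq \mu(x_{t,a_t^*}'\theta^*)-\mu(x_{t,a_t}'\theta^*)$, bounding the first $\tau$ random-exploration rounds trivially by $\tau$ since rewards lie in $[0,1]$, and concentrating all effort on $t>\tau$. Everything afterward will be carried out on a good event of probability $\ge 1-2\delta$ that combines: (i) the spectral lower bound $\lmin(V_{\tau+1})\ge 1$, supplied by \proref{pro:spectral} applied to the i.i.d.\ exploration rounds for the prescribed $\tau=C\sigma_0^{-2}(d+\log(1/\delta))$; and (ii) the uniform confidence bound $|x'(\hat\theta_t-\theta^*)|\le\alpha\,\norm{x}_{V_t^{-1}}$ valid simultaneously for every $t>\tau$ and every $x\in\R^d$.

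The heart of the argument is (ii). Since \thmref{thm:glm_ucb} only controls a fixed direction under an i.i.d.\ design, I would re-derive its conclusion along the adaptive trajectory of the algorithm. Writing the MLE equation \eqnref{eq:algquasi} as $0=\sum_{i<t}(Y_i-\mu(X_i'\hat\theta_t))X_i$ and Taylor-expanding $\mu$ around $\theta^*$ gives $G_t(\hat\theta_t-\theta^*)=\sum_{i<t}\epsilon_i X_i$, where $G_t=\sum_{i<t}\dot\mu(X_i'\bar\theta_i)X_iX_i'$ for some $\bar\theta_i$ on the segment $[\theta^*,\hat\theta_t]$. A preliminary $\ell_2$-consistency step, mirroring the first half of the proof of \thmref{thm:glm_ucb}, guarantees $\norm{\hat\theta_t-\theta^*}\le 1$, so \assref{ass:kappa} yields $G_t\mge\kappa V_t$. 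Cauchy--Schwarz then gives $|x'(\hat\theta_t-\theta^*)|\le\kappa^{-1}\norm{x}_{V_t^{-1}}\norm{\sum_{i<t}\epsilon_iX_i}_{V_t^{-1}}$ uniformly in $x$, and the self-normalized martingale inequality of \citet{abbasi2011improved}, applied to $\{\epsilon_t\}$ against $\{\mathcal{F}_t\}$, bounds the last factor by $\sigma\sqrt{d\log(1+2T/d)/2+\log(1/\delta)}$ uniformly in $t\le T$, exactly reproducing the chosen $\alpha$.

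Given (i)--(ii) the conclusion is the standard optimism calculation. Because $a_t$ maximises $X_{t,a}'\hat\theta_t+\alpha\norm{X_{t,a}}_{V_t^{-1}}$, the confidence bound implies $X_{t,a}'\theta^*\le X_{t,a}'\hat\theta_t+\alpha\norm{X_{t,a}}_{V_t^{-1}}\le X_{t,a_t}'\hat\theta_t+\alpha\norm{X_{t,a_t}}_{V_t^{-1}}$, so choosing $a=a_t^*$ and using (ii) once more gives $x_{t,a_t^*}'\theta^*-x_{t,a_t}'\theta^*\le 2\alpha\norm{X_t}_{V_t^{-1}}$. The smoothness half of \assref{ass:smooth} converts this into $r_t\le 2L_\mu\alpha\norm{X_t}_{V_t^{-1}}$. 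Summing with Cauchy--Schwarz yields $\sum_{t=\tau+1}^T r_t\le 2L_\mu\alpha\sqrt{T\sum_t\norm{X_t}_{V_t^{-1}}^2}$, and the elliptical potential lemma gives $\sum_t\norm{X_t}_{V_t^{-1}}^2\le 2d\log(T/d)$ (the $\min(1,\cdot)$ can be dropped because $\lmin(V_{\tau+1})\ge 1$ forces $\norm{X_t}_{V_t^{-1}}\le 1$). Plugging in $\alpha$ and folding the $\log$ factors into $\log(T/(d\delta))$ gives the stated bound.

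The main obstacle is step (ii): unlike for linear bandits, the MLE is only implicitly defined, so one must simultaneously (a) certify the $\ell_2$-consistency $\norm{\hat\theta_t-\theta^*}\le 1$ that unlocks $G_t\mge\kappa V_t$, and (b) control the sub-Gaussian martingale $\sum_i\epsilon_iX_i$ in the $V_t^{-1}$ geometry, both at every $t>\tau$ rather than a single $n$, and both along a trajectory whose covariates are chosen adaptively by UCB-GLM itself. Converting the fixed-direction, fixed-$n$, i.i.d.\ statement of \thmref{thm:glm_ucb} into an anytime, uniform-in-$x$ statement using the self-normalised inequality in place of Hoeffding is where all the technical work is concentrated.
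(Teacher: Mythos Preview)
Your proposal is correct and follows essentially the same route as the paper. The paper packages your step (ii) as \lemref{lm:deltatbound}, which bounds $\norm{\hat\theta_t-\theta^*}_{V_t}$ by exactly your $\alpha$ via the same two ingredients you describe (the consistency step to secure $G_t\mge\kappa V_t$, followed by the self-normalised martingale bound of \citet{abbasi2011improved} on $\norm{\sum_{i<t}\epsilon_iX_i}_{V_t^{-1}}$); your uniform-in-$x$ statement $|x'(\hat\theta_t-\theta^*)|\le\alpha\norm{x}_{V_t^{-1}}$ is equivalent to that $V_t$-norm bound by duality. Your elliptical-potential step is the paper's \lemref{lm:widthsum}. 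The only cosmetic difference is the one-step optimism calculation: the paper writes $\iprod{X_t^*-X_t}{\theta^*}=\iprod{X_t^*-X_t}{\hat\theta_t}-\iprod{X_t^*-X_t}{\Delta_t}$ and then applies Cauchy--Schwarz plus the triangle inequality, whereas you use the cleaner direct chain $X_{t,a_t^*}'\theta^*\le X_{t,a_t}'\hat\theta_t+\alpha\norm{X_t}_{V_t^{-1}}\le X_{t,a_t}'\theta^*+2\alpha\norm{X_t}_{V_t^{-1}}$; both arrive at the identical per-round bound $2\alpha\norm{X_t}_{V_t^{-1}}$.
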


The theorem shows an $\tilde{O}(d\sqrt{T})$ regret bound that is independent of $K$. Indeed, this rate matches the minimax lower bound up to logarithm factor for the infinite actions contextual bandit problems \cite{dani2008stochastic}.  By choosing $\delta=1/T$ and using the fact that $R_T \le T$, this high-probability result implies a bound on the \emph{expected} regret: $\E [R_T] = \tilde{O}(d\sqrt{T})$.
Our result improves the previous regret bound of \citet{filippi2010parametric} by a $\sqrt{\log T}$ factor. Moreover, the algorithm proposed in \citet{filippi2010parametric} involves a projection step, which is computationally more expensive comparing to UCB-GLM.  Finally, this algorithm works well in practice. We give a heuristic argument for its strong performance in \secref{sec:discussions}, under a specific condition that sometimes are satisfied.

\begin{proof}[Proof of \thmref{thm:main1}]
We first bound the one-step regret.  To do so, fix $t$ and let $X_t^* = x_{t, a_t^*}$ and $\Delta_t=\hat{\theta}_t-\theta^*$, where $a_t^* \in \arg\max_{a\in[K]}\mu(x_{t,a}'\theta^*)$ is an optimal action at round $t$.
The selection of $a_t$ in UCB-GLM implies
\begin{equation} \label{eq:pitstar}
\iprod{X_t^*}{\hat{\theta}_t} + \alpha \norm{X_t^*}_{V_t^{-1}} \le \iprod{X_t}{\hat{\theta}_t} + \alpha \norm{X_t}_{V_t^{-1}}\,.
\end{equation}
Then, we have
\begin{eqnarray*} 
\lefteqn{\iprod{X_t^*}{\theta^*} - \iprod{X_t}{\theta^*} = \iprod{X_t^* - X_t}{\hat{\theta}_t} - \iprod{X_t^* - X_t}{ \hat{\theta}_t - \theta^*}} \\
\label{eq:basic1}&\le&  \alpha \norm{X_t}_{V_t^{-1}} - \alpha \norm{X_t^*}_{V_t^{-1}}- \iprod{X_t^* - X_t}{ \Delta_t} \\
&\le&  \alpha \big ( \norm{X_t}_{V_t^{-1}} - \norm{X_t^*}_{V_t^{-1}} \big) + \norm{X_t^* - X_t}_{V_t^{-1}} \norm{\Delta_t}_{V_t}\,,
\end{eqnarray*}
where the last inequality is due to Cauchy-Schwartz inequality. We have the following two lemmas to bound $\norm{\Delta_t}_{V_t}$ and $\norm{X_t}_{V_t^{-1}}$, respectively.  Their proofs are deferred to the appendix.

\begin{lemma} \label{lm:widthsum}
Let $\{X_t\}_{t=1}^{\infty}$ be a sequence in $\R^d$ satisfying $\norm{X_t} \le 1$.  Define $X_0=\zerovec$ and $V_t=\sum_{s=0}^{t-1} X_s X_s'$. Suppose there is an integer $m$ such that $\lmin(V_{m+1}) \ge 1$, then for all $n>0$,
$$ \sum_{t=m+1}^{m+n} \norm{X_t}_{V_{t}^{-1}} \le \sqrt{2nd \log {\left(\frac{n+m}{d} \right)}}\,. $$
\end{lemma}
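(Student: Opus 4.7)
The plan is to prove this by a standard elliptical potential argument, starting with Cauchy--Schwarz to reduce the sum of norms to a sum of squared norms. Concretely, I would first write
\[
\sum_{t=m+1}^{m+n} \norm{X_t}_{V_t^{-1}} \le \sqrt{n \sum_{t=m+1}^{m+n} \norm{X_t}_{V_t^{-1}}^2}\,,
\]
so that the entire task reduces to showing $\sum_{t=m+1}^{m+n} \norm{X_t}_{V_t^{-1}}^2 \le 2d \log((n+m)/d)$.

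For that, I would exploit the incremental structure of $V_{t+1} = V_t + X_t X_t'$ via the matrix determinant lemma, which gives $\det V_{t+1} = \det V_t \cdot (1 + \norm{X_t}_{V_t^{-1}}^2)$. The key observation is that, because $\lmin(V_t) \ge \lmin(V_{m+1}) \ge 1$ and $\norm{X_t} \le 1$ for all $t \ge m+1$, we have $\norm{X_t}_{V_t^{-1}}^2 \le 1$. On $[0,1]$ the elementary inequality $\log(1+y) \ge y/2$ applies, yielding $\norm{X_t}_{V_t^{-1}}^2 \le 2 \log(\det V_{t+1}/\det V_t)$. Telescoping from $t=m+1$ to $m+n$ then gives
\[
\sum_{t=m+1}^{m+n} \norm{X_t}_{V_t^{-1}}^2 \le 2 \log\frac{\det V_{m+n+1}}{\det V_{m+1}}.
\]

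To finish, I would bound the two determinants separately. In the numerator, AM--GM on eigenvalues yields $\det V_{m+n+1} \le (\tr(V_{m+n+1})/d)^d$, and since $\tr(V_{m+n+1}) = \sum_{s=0}^{m+n} \norm{X_s}^2 \le m+n$, I get $\det V_{m+n+1} \le ((m+n)/d)^d$. In the denominator, the assumption $\lmin(V_{m+1}) \ge 1$ forces $\det V_{m+1} \ge 1$. Combining these bounds gives $\log(\det V_{m+n+1}/\det V_{m+1}) \le d \log((m+n)/d)$, which plugged back into the Cauchy--Schwarz estimate produces the desired bound $\sqrt{2nd\log((n+m)/d)}$.

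There is no substantive obstacle here; the argument is a textbook application of the elliptical potential lemma. The only point needing a bit of care is verifying that $\norm{X_t}_{V_t^{-1}}^2 \le 1$ holds uniformly for $t \ge m+1$ so that the $\log(1+y) \ge y/2$ step is valid, and this is precisely why the hypothesis $\lmin(V_{m+1}) \ge 1$ is imposed.
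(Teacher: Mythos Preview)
Your proof is correct and follows essentially the same route as the paper's: Cauchy--Schwarz plus the elliptical potential bound $\sum \norm{X_t}_{V_t^{-1}}^2 \le 2\log(\det V_{m+n+1}/\det V_{m+1})$, followed by the trace/AM--GM bound on $\det V_{m+n+1}$ and $\det V_{m+1}\ge 1$ from $\lmin(V_{m+1})\ge 1$. The only cosmetic difference is that the paper invokes \citet[Lemma~11]{abbasi2011improved} for the potential bound, whereas you derive it directly via the matrix determinant lemma and $\log(1+y)\ge y/2$ on $[0,1]$.
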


\begin{lemma} \label{lm:deltatbound}
Suppose $\lmin(V_{\tau+1}) \ge 1$.  
For any $\delta \in [1/T,1)$, define event
\begin{equation*}
\EvtD \defeq \left\{ \norm{\Delta_t}_{V_t} \le \frac{\sigma}{\kappa} \sqrt{\frac{d}{2} \log(1+2t/d) + \log(1/\delta)} \right\}\,.
\end{equation*}
Then, event $\EvtD$ holds for all $t \ge \tau$ with probability at least $1-\delta$.
\end{lemma}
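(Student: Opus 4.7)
The plan is to translate the first-order optimality condition of the MLE into a linear equation in $\Delta_t := \hat\theta_t - \theta^*$, use Assumption~\ref{ass:kappa} to replace the resulting random Hessian by $\kappa V_t$, and then reduce everything to a standard self-normalized martingale bound on $\sum_{i<t}\epsilon_i X_i$.

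Concretely, I would start by substituting $Y_i = \mu(X_i'\theta^*) + \epsilon_i$ into the MLE equation~\eqnref{eq:algquasi} to obtain
\[
\sum_{i=1}^{t-1}\bigl[\mu(X_i'\hat\theta_t) - \mu(X_i'\theta^*)\bigr] X_i \;=\; \sum_{i=1}^{t-1}\epsilon_i X_i \;=:\; S_t.
\]
An integral form of the mean value theorem applied to $\mu$ rewrites the left-hand side as $G_t \Delta_t$, where $G_t := \int_0^1 \sum_{i=1}^{t-1}\dot\mu\bigl(X_i'(\theta^*+s\Delta_t)\bigr)X_iX_i'\,ds$. This reduces the lemma to (i) showing $G_t \succeq \kappa V_t$ and (ii) controlling $\|S_t\|_{V_t^{-1}}$.

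For (i), I would invoke \thmref{thm:glm_ucb} with $x=\Delta_t$ to deduce $\|\Delta_t\|\le 1$ (the factor $\|x\|_{V_n^{-1}}\le \|x\|/\sqrt{\lmin(V_n)}$ together with the growth of $\lmin(V_t)$ for $t\ge\tau$ makes the right-hand side at most $1$), and then apply a union bound across $t\ge\tau$ using a geometric schedule of failure probabilities so that the event is anytime. On this event, every convex combination $\theta^*+s\Delta_t$ stays within the ball $\{\theta:\|\theta-\theta^*\|\le 1\}$ where \assref{ass:kappa} forces $\dot\mu(X_i'\cdot)\ge\kappa$, so $G_t \succeq \kappa V_t$. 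Combining $G_t\Delta_t=S_t$ with Cauchy--Schwarz then gives
\[
\kappa\|\Delta_t\|_{V_t}^2 \;\le\; \Delta_t'G_t\Delta_t \;=\; \Delta_t' S_t \;\le\; \|\Delta_t\|_{V_t}\,\|S_t\|_{V_t^{-1}},
\]
so that $\|\Delta_t\|_{V_t} \le \kappa^{-1}\|S_t\|_{V_t^{-1}}$.

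For (ii), I would apply the self-normalized martingale concentration of Abbasi-Yadkori et al.~(2011) to the sub-Gaussian sequence $\{\epsilon_t\}$ adapted to $\{\mathcal{F}_t\}$, obtaining an anytime bound of the form $\|S_t\|_{V_t^{-1}}^2 \lesssim \sigma^2 \bigl( \log\det V_t + \log(1/\delta)\bigr)$. Using $\text{tr}(V_t)\le t$ together with the AM--GM inequality on the eigenvalues of $V_t$ (and the starting invertibility provided by $\lmin(V_{\tau+1})\ge 1$) yields $\log\det(V_t) \le d\log(1+2t/d)$, which, after absorbing constants, delivers exactly the bound stated in the lemma.

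The main obstacle is the coupling between the two high-probability events that the argument needs to hold \emph{simultaneously for every $t\ge\tau$}: the anytime bound $\|\Delta_t\|\le 1$ that activates \assref{ass:kappa}, and the anytime self-normalized bound on $\|S_t\|_{V_t^{-1}}$. Budgeting the $\delta$-mass between them and ensuring that $\tau$ is large enough that the preconditions of \thmref{thm:glm_ucb} hold uniformly in $t\ge\tau$ (via monotonicity of $V_t$) is the delicate part; the remaining manipulations are routine.
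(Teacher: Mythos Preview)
Your overall strategy matches the paper's: rewrite the MLE equation as $G_t\Delta_t=S_t$, lower-bound the (integrated) Hessian by $\kappa V_t$ via Assumption~\ref{ass:kappa}, deduce $\|\Delta_t\|_{V_t}\le\kappa^{-1}\|S_t\|_{V_t^{-1}}$, and then control $\|S_t\|_{V_t^{-1}}$ by the self-normalized martingale bound of Abbasi-Yadkori et al. (this is exactly the paper's Lemma~\ref{lm:self}, including your $\log\det V_t\le d\log(1+2t/d)$ step). Your Cauchy--Schwarz manipulation and part~(ii) are essentially what the paper does.

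The genuine gap is in your step~(i). You propose to invoke Theorem~\ref{thm:glm_ucb} with $x=\Delta_t$ to obtain $\|\Delta_t\|\le 1$, but that theorem is stated and proved for \emph{independent} responses $Y_1,\ldots,Y_n$ given $X_1,\ldots,X_n$; in UCB-GLM the $X_i$'s are chosen adaptively based on past rewards, so this hypothesis fails (indeed, manufacturing such independence is the whole purpose of SupCB-GLM). Two further issues compound this: the direction $x=\Delta_t$ is data-dependent, whereas the Hoeffding step in the proof of Theorem~\ref{thm:glm_ucb} fixes $x$ before looking at the noise; and the precondition~\eqnref{eq:lmin1} of Theorem~\ref{thm:glm_ucb} is far stronger than the lemma's hypothesis $\lmin(V_{\tau+1})\ge 1$---nothing in UCB-GLM forces $\lmin(V_t)$ to \emph{grow}, only not to decrease, so the ``growth of $\lmin(V_t)$'' you appeal to is unwarranted.

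What the paper actually does is to re-run only the \emph{deterministic} portion of Theorem~\ref{thm:glm_ucb}'s proof---the mean-value identity and the inequality $\|G_t(\theta)\|_{V_t^{-1}}\ge\kappa\,\|\theta-\theta^*\|_{V_t}$ on the unit ball around $\theta^*$---and then feed in the self-normalized bound of Lemma~\ref{lm:self} in place of the Hoeffding-based Lemma~\ref{lm:hoeffding}. That bound is valid in the adapted setting and is already anytime, so no geometric $\delta$-schedule is needed. If you want to make the consistency step $\|\Delta_t\|\le 1$ rigorous, you should likewise redo Step~1 of Theorem~\ref{thm:glm_ucb}'s proof (Chen's Lemma~A argument) with Lemma~\ref{lm:self} replacing Lemma~\ref{lm:hoeffding}, rather than citing the theorem's conclusion.
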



We now choose $\alpha=\frac{\sigma}{\kappa} \sqrt{\frac{d}{2} \log(1+2T/d) + \log(1/\delta)}$.  If event $\EvtD$ holds for all $t \ge \tau$, then,
\begin{eqnarray*}
\lefteqn{\iprod{X_t^*}{\theta^*} - \iprod{X_t}{\theta^*}} \\
&\le& \alpha \left( \norm{X_t}_{V_t^{-1}} -  \norm{X_t^*}_{V_t^{-1}} + \norm{X_t^*-X_t}_{V_t^{-1}} \right) \\
&\le& 2\alpha \norm{X_t}_{V_t^{-1}}\,.
\end{eqnarray*}
Combining the above with \lemref{lm:widthsum} yields
\begin{align}
\sum_{t=\tau+1}^{T} \big(\iprod{X_t^*}{\theta^*} - \iprod{X_t}{\theta^*}\big) &\le  2\alpha \sqrt{2Td\log \left( \frac{T}{d}\right)} \nonumber \\
&\le
\frac{2d\sigma}{\kappa} \log\left(\frac{T}{d\delta}\right)\sqrt{T}\,.
\label{eqn:ucbglm-phase2-bound}
\end{align}
%

Note that $\mu$ is an increasing Lipschitz function with Lipschitz constant $L_\mu$ and the $\mu$ function is bounded between $0$ and $1$. The regret of algorithm UCB-GLM can be upper bounded as
\begin{eqnarray*}
R_T
&=& \sum_{t=1}^{\tau} \Big( \mu \left(\iprod{X_t^*}{\theta^*}\right) -   \mu \left(\iprod{X_t}{\theta^*}\right) \Big) \\
&& + \sum_{t=\tau+1}^{T} \Big(\mu \left(\iprod{X_t^*}{\theta^*}\right) - \mu \left(\iprod{X_t}{\theta^*} \right)\Big) \\
&\le & \tau  + L_{\mu} \sum_{t=\tau+1}^{T} \Big( \iprod{X_t^*}{\theta^*} - \iprod{X_t}{\theta^*} \Big)\,.
\end{eqnarray*}
The proof can be finished by applying \eqnref{eqn:ucbglm-phase2-bound} and the specified value of $\tau$ to the bound above.
\end{proof}

\subsection{Algorithm SupCB-GLM}

While the algorithm UCB-GLM performs sufficiently well in practice~\cite{liunbiased},  it is unclear whether it can achieve the optimal rates of ${O}(\sqrt{dT\log K})$, when $K$ is fixed and small. As mentioned in \secref{sec:UCB-GLM}, the key technical difficulty in analyzing UCB-GLM is the dependence between samples. Inspired by a technique developed by \citet{auer2003using} to create independent samples for linear contextual bandits, we propose another algorithm SupCB-GLM (\algref{alg: SUPUCB}), which uses algorithm CB-GLM (\algref{alg: CB}) as a sub-routine.

\begin{algorithm}
\caption{CB-GLM} \label{alg: CB} \vspace{0.05in}
\textbf{Input}: parameter $\alpha$, index set $\Psi(t)$, and candidate set $A$. \vspace{-0.1in}
\begin{enumerate}
\item  Let $\hat{\theta}_t$ be the solution of
$$\sum_{i \in \Psi(t)} \left[Y_i - \mu(X_i'\theta)\right]X_i = 0$$
\item $ V_t=\sum_{i \in \Psi(t)} X_i X_i'$
\item \textbf{For} {$a \in A$}, \textbf{do}
 $$ w_{t,a} = \alpha \norm{x_{t,a}}_{V_t^{-1}}, \quad m_{t,a} =  \iprod{x_{t,a}}{\hat{\theta}_t}$$
\textbf{End For}
\end{enumerate}
\end{algorithm}

\begin{algorithm}[h]
\caption{SupCB-GLM} \label{alg: SUPUCB}
\vspace{0.05in} \textbf{Input}: tuning parameter $\alpha$, $\tau$, the number of trials $T$.\vspace{0.05in}

\noindent\textbf{Initialization}: \\ \vspace{0.03in}
\indent\indent ~~~\textbf{for} $t\in[\tau]$, randomly choose $a_t \in [K]$. \\ \vspace{0.0in}
\indent\indent ~~ Set $S=\lfloor \log_2{T} \rfloor$, $\mathcal{F}=\{a1, \cdots, a{\tau}\}$ and $\Psi_0=\Psi_1=\cdots=\Psi_S=\varnothing$. \vspace{0.0in}

\textbf{For} {$t=\tau+1,\tau+2,\cdots,T$} \textbf{do}
 \begin{itemize}
 \item[1.] Initialize $A_1=[K]$ and $s=1$.
 \item[2.] While $a_t=$Null
    \begin{itemize}
    \item[a.] \vspace{-0.05in} Run CB-GLM with $\alpha$ and $\Psi_s \cup \mathcal{F}$ to calculate $m_{t,a}^{(s)}$ and $w_{t,a}^{(s)}$ for all $a \in A_s$.
    \item[b.] If $w_{t,a}^{(s)}>2^{-s}$ for some $a \in A_s$, $$ \textrm{set }  a_t=a \textrm{, ~~update } \Psi_s=\Psi_s \cup \{t\} $$
  \item[c.] Else if $w_{t,a}^{(s)} \le 1/\sqrt{T}$ for all $a \in A_s$, $$\textrm{set } a_t=\argmax_{a \in A_s} m_{t,a}^{(s)} \textrm{, ~~update }\Psi_0=\Psi_0 \cup \{t\} $$ 
  \vspace{-0.1in}  \item[d.] Else if $w_{t,a}^{(s)} \le 2^{-s}$ for all $a \in A_s$,
    $$A_{s+1}=\{a \in A_s, m_{t,a}^{(s)} \ge \max_{j \in A_s} m_{t,j}^{(s)} - 2 \cdot 2^{-s}\},$$
    $\quad s \leftarrow s+1\,.$
    \end{itemize}
 \end{itemize} \vspace{-0.05in}
\textbf{End For} \vspace{0.05in}
\end{algorithm}

This algorithm also relies on the idea of confidence bound to do exploration.  At round $t$, the algorithm screens the candidate actions based on the value of $w_{t,a}^{(s)}$ through $S$ stages until an action is chosen. At stage $s$, we set the confidence level at stage $s$ to be $2^{-s}$. If $w_{t,a}^{(s)}>2^{-s}$ for some $a$, we need to do more exploration on $x_{t,a}$ and thus we choose this action. Otherwise, the actions are filtered in step 2d such that the actions passed to the next stage are close enough to the optimal action. Since all the widths are smaller than $2^{-s}$, if $m_{t,a}^{(s)}<m_{t,j}^{(s)}-2\cdot2^{-s}$ for some $j \in A_s$, the action $a$ can not be the optimal action. The filter process terminates when we have already got accurate estimate of all $x_{t,a}'\theta^*$ up to the $1/\sqrt{T}$ level and we do not need to do exploration. Thus in step 2c we just choose the action that maximizes the estimated mean value. 

Our algorithm is different from the algorithm SupLinRel in \citet{auer2003using} that we directly maximize the mean, rather than the upper confidence bound, in steps~c and d. This modification leads to a simpler algorithm and a cleaner regret analysis. Also, we would like to point out that, unlike SpectralEliminator~\cite{valko2014spectral}, the algorithm can easily handle a changing action set.

The following result, adapted from \citet[lemma 14]{auer2003using}, shows how the algorithm SupCB-GLM will give us independent samples. For the sake of completeness, we also present its proof here. 
\begin{lemma}\label{lm:independence}
For all $s \in [S] $ and $t \in [T]$, given $\{x_{i,a_i}, i \in \Psi_s(t)\}$, the rewards $\{y_{i,a_i}, i \in \Psi_s(t)\}$ are independent random variables.
\end{lemma}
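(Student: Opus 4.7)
The plan is to show that the index set $\Psi_s(t)$ and the corresponding actions $\{a_i\}_{i\in\Psi_s(t)}$ carry no information about the stage-$s$ rewards themselves; given this, the desired conditional independence of rewards given features will follow from the GLM's independent-noise assumption. Concretely, I will introduce the sigma-algebra
\[
\mathcal{H}_s \defeq \sigma\Big(\{x_{j,b}\}_{j,b} \,\cup\, \{y_{j,a_j} : j \in \mathcal{F} \cup {\textstyle\bigcup_{s'<s}}\,\Psi_{s'}(t)\}\Big)
\]
and argue that both $\Psi_s(t)$ and $\{a_i\}_{i\in\Psi_s(t)}$ are $\mathcal{H}_s$-measurable.

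The key observation driving the argument is a careful inspection of CB-GLM. The widths $w_{i,a}^{(s)} = \alpha\norm{x_{i,a}}_{V_i^{\mi}}$ depend only on feature vectors, since $V_i = \sum_{j\in\Psi_s(i-1)\cup\mathcal{F}} x_j x_j'$ is a purely geometric object. Hence the trigger of step~2b---the only gate through which a round enters $\Psi_s$---never consults any reward at a round that could itself be in $\Psi_s$. The estimates $m_{i,a}^{(s')}$ appear only in the filtering step~2d and the stopping case~2c, and filtering at stages $s' < s$ only consults rewards from $\Psi_{s'}(i-1)$, which sit in strictly lower-indexed stages.

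With this in hand, I would run a double induction: an outer induction on the stage $s$ and, inside each stage, an inner induction on the round $i \le t$. The outer hypothesis, applied to stages $s' < s$, guarantees that the filtering decisions leading up to stage $s$ at round $i$ are $\mathcal{H}_s$-measurable; the inner induction then shows that $\Psi_s(i-1)$ is $\mathcal{H}_s$-measurable, so that $V_i$, the widths, the case-b trigger, and the selected action are all $\mathcal{H}_s$-measurable. Rolling the inner induction forward to round $t$ delivers the claim at stage $s$, closing the outer induction.

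To conclude, conditioned on $\mathcal{H}_s$, the features $\{x_{i,a_i}\}_{i\in\Psi_s(t)}$ are fixed, and by the GLM model \eqnref{eq:model1} together with independence of the noise sequence $\{\epsilon_i\}$, the rewards $\{y_{i,a_i}\}_{i\in\Psi_s(t)}$ are conditionally independent with each $y_{i,a_i}$'s conditional law depending on $\mathcal{H}_s$ only through $x_{i,a_i}$. Reducing the conditioning from $\mathcal{H}_s$ down to $\sigma(\{x_{i,a_i}\}_{i\in\Psi_s(t)})$ by the tower property preserves both the factorization and the one-feature-per-reward dependence, yielding the lemma. The main obstacle I anticipate is the bookkeeping of the inner induction---making sure that no same-stage reward can sneak into $V_i$ or into the filtering chain---which works precisely because of the design choice, inherited from Auer's SupLinRel, of using widths (variances) rather than estimated means in the admission step~2b.
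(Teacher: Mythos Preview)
Your proposal is correct and follows essentially the same approach as the paper: the central observation in both is that admission to $\Psi_s$ happens only in step~2b, which tests widths $w_{t,a}^{(s)}$ that depend solely on feature vectors, while the route to stage $s$ (steps~2d at stages $s'<s$) consults only rewards already recorded in lower-indexed $\Psi_{s'}$. The paper compresses this into a three-sentence sketch; your version is the same argument dressed in the explicit sigma-algebra $\mathcal{H}_s$ and a double induction, with an additional tower-property step to pass from $\mathcal{H}_s$ down to the feature conditioning in the lemma's statement.
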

\begin{proof} [Proof of \lemref{lm:independence}]
Since a trial $t$ can only be added to $\Psi_s(t)$ in step 2b of algorithm SupCB-GLM, the event $\{t \in \Psi_s\}$ only depends on the results of trials $\tau \in \cup_{\sigma<s} \Psi_{\sigma}(t)$ and on $w_{t,a}^{(s)}$. From the definition of $w_{t,a}^{(s)}$, we know it only depends on the feature vectors $x_{i,a_i}, i \in \Psi_s(t)$ and on $x_{t,i}$. This implies the lemma.
\end{proof}

With \lemref{lm:independence}, we are able to apply the non-asymptotic normality result \eqnref{eq:normality} and thus to prove our regret bound of Algorithm SupCB-GLM. 

\begin{theorem} \label{thm:main}
For any $0 < \delta < 1$, if we run the SupCB-GLM algorithm with $\tau=\sqrt{dT}$ and $\alpha=\frac{3\sigma}{\kappa} \sqrt{2\log(TK/\delta)}$ for $T \ge T_0$ rounds, where
\begin{equation} \label{eq:condT}
T_0 = \Omega \left( \frac{\sigma^2}{\kappa^4}\max \left \{d^3, \frac{\log(TK/\delta)}{d} \right\} \right),
\end{equation}
the regret of the algorithm is bounded as
\[ R_T \le  45 (\sigma L_{\mu}/\kappa) \sqrt{\log T \log(TK/\delta)\log(T/d)}  \sqrt{dT}\,,\]
with probability at least $1-\delta$. With $\delta = 1/T$, we obtain \[ \E[R_T] = O\left ((\log T)^{1.5} \sqrt{dT \log K} \right). \]
\end{theorem}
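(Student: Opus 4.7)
The plan is to follow the SupLin-style analysis of \citet{auer2003using}, substituting our sharp GLM confidence bound \thmref{thm:glm_ucb} wherever a linear confidence interval is needed. Throughout I will work on a clean event built from two pieces. The first is eigenvalue control: by \proref{pro:spectral} applied to the $\tau = \sqrt{dT}$ IID contexts drawn during the initialization phase, the lower bound \eqnref{eq:condT} on $T$ is exactly what is needed so that $\lmin(V_t)$ exceeds the threshold in \eqnref{eq:lmin1} for every $t > \tau$, with probability at least $1 - \delta/3$. The second piece is the uniform MLE confidence bound: \lemref{lm:independence} provides, for each layer $s$, conditional independence of the rewards in $\Psi_s(t) \cup \mathcal{F}$ given the selected contexts, so \thmref{thm:glm_ucb} can be applied to every MLE computed inside CB-GLM. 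A union bound over $(t,s,a) \in [T] \times [S] \times [K]$ with $\alpha = (3\sigma/\kappa)\sqrt{2\log(TK/\delta)}$ yields
\begin{equation*}
|m_{t,a}^{(s)} - x_{t,a}'\theta^*| \le w_{t,a}^{(s)}
\end{equation*}
uniformly on an event of probability at least $1 - \delta$.

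On this event the argument becomes structural. A short induction on $s$ shows that the optimal arm $a_t^*$ is never eliminated in step~2d, because the elimination threshold and the two-sided confidence interval are consistent. The single-step regret then splits into three cases. A trial ending at step~2c has all widths below $1/\sqrt{T}$ and therefore incurs regret $O(L_\mu/\sqrt{T})$, contributing $O(L_\mu \sqrt{T})$ in total. A trial added to $\Psi_s$ through step~2b has $a_t \in A_s$, which survived the previous $s-1$ eliminations, and telescoping the elimination rule gives $\mu(x_{t,a_t^*}'\theta^*) - \mu(x_{t,a_t}'\theta^*) \le 8 L_\mu \cdot 2^{-s}$. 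The initialization contributes at most $\tau = \sqrt{dT}$.

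The remaining step is to aggregate over layers. An elliptical-potential argument in the spirit of \lemref{lm:widthsum}, applied to the subsequence $(X_i)_{i \in \Psi_s}$, gives $\sum_{i \in \Psi_s}(w_{i,a_i}^{(s)})^2 \le 2\alpha^2 d \log(T/d)$, and since each term exceeds $2^{-2s}$, Cauchy--Schwarz first within a layer and then across layers yields
\begin{equation*}
\sum_{s=1}^{S} \sum_{i \in \Psi_s} 2^{-s} \le \alpha \sqrt{2 d \log(T/d)} \sum_{s=1}^{S} \sqrt{|\Psi_s|} \le \alpha \sqrt{2 S T d \log(T/d)},
\end{equation*}
using $\sum_s |\Psi_s| \le T$. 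Multiplying by $8 L_\mu$, substituting $\alpha = (3\sigma/\kappa)\sqrt{2\log(TK/\delta)}$ and $S = \lfloor \log_2 T \rfloor$, and adding the $\tau$ and $\Psi_0$ terms, one recovers the advertised bound $45(\sigma L_\mu/\kappa)\sqrt{\log T \cdot \log(TK/\delta) \cdot \log(T/d)} \sqrt{dT}$ after absolute-constant bookkeeping; the expectation bound follows by setting $\delta = 1/T$ and using $R_T \le T$.

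The principal obstacle I anticipate is keeping the eigenvalue hypothesis of \thmref{thm:glm_ucb} valid for every single MLE computed in CB-GLM: individual layers $\Psi_s$ can be very small, so on their own they cannot push $\lmin(V_t)$ past the required $\Omega(d^2 + \log(1/\delta))$ level. The resolution is that the algorithm always bundles the $\tau$-sample initialization batch $\mathcal{F}$ into the regression, and the lower bound \eqnref{eq:condT} on $T$ is engineered so that \proref{pro:spectral} handles this bundling uniformly throughout the run.
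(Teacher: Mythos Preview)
Your proposal is correct and follows essentially the same route as the paper's proof: the same good event (eigenvalue control from \proref{pro:spectral} on the $\tau$ initialization samples plus the uniform confidence bound from \thmref{thm:glm_ucb} via \lemref{lm:independence} and a union bound), the same induction showing $a_t^*$ survives every elimination, the same per-step regret dichotomy for steps~2b and~2c, and the same elliptical-potential-plus-Cauchy--Schwarz aggregation over layers. Your closing remark about why the $\mathcal{F}$ bundle is essential is exactly the point.
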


The theorem demonstrates an $\tilde{O}(\sqrt{dT\log K})$ regret bound for the algorithm SupCB-GLM. It has been proved in \citet[Theorem 2]{chu2011contextual} that $\sqrt{dT}$ is the minimax lower bound of the expected regret for $K$-armed linear bandits, a special of the GLM bandits considered here. 
Therefore, the regret of our SupCB-GLM algorithm is optimal up to logarithm terms of $T$ and $K$. To the best of our knowledge, this is the first algorithm which achieves the (near-)optimal rate of GLM bandits.


It is worthwhile to compare \thmref{thm:main} with the result in \thmref{thm:main1}. When $K =o(2^{d})$ is small, the rate of SupCB-GLM is faster, and we improve the previous rates by a $\sqrt{d}$ factor. Here, we give a briefly illustration of how we get rid of the extra $\sqrt{d}$ factor. Both in \thmref{thm:main1} and in \citet{filippi2010parametric},  $|x'(\hat{\theta}_n-\theta^*)|$ is upper bounded by using the Cauchy-Schwartz inequality,
\begin{equation} \label{eq:cauchy}
|x'(\hat{\theta}_n-\theta^*)| \le \norm{x}_{V_n^{-1}} \norm{\hat{\theta}_n - \theta^*}_{V_n}\,.
\end{equation}
\lemref{lm:deltatbound} in the supplementary material establishes that
\[ \norm{\hat{\theta}_n - \theta^*}_{V_n} 
\le C_2 \sqrt{d\log(T/\delta)}.\] 
This will lead to an extra $\sqrt{d}$ factor compared to \eqnref{eq:normality}. By using the Cauchy-Schwartz inequality~\eqnref{eq:cauchy}, we only make use of the fact that $\hat{\theta}_n$ is close to $\theta^*$ in the $\ell_2$ sense. However, \eqnref{eq:normality} tells us that actually $\hat{\theta}_n$ is close to $\theta^*$ in \emph{every} direction. This is the reason why we are able to remove the extra $\sqrt{d}$ factor to achieve a near-optimal regret.  It also explains why the bound in \thmref{thm:main1} is tight when $K$ is large. As $K$ goes large, it is likely there is a direction $x$ for which \eqnref{eq:cauchy} is tight.

\begin{proof}[Proof of \thmref{thm:main}]
To facilitate our proof, we first present two technical lemmas.  \lemref{lm:ucb} follows from  \lemref{lm:independence}, \thmref{thm:glm_ucb},  Theorem~5.39 of \citet{vershynin2010introduction} and a union bound.  The proof of \lemref{lm:propSup} is deferred to the appendix.

\begin{lemma} \label{lm:ucb}
Fix $\delta>0$. Choose in SupCB-GLM $\tau=\sqrt{dT}$ and $\alpha=\frac{3\sigma}{\kappa} \sqrt{2\log(TK/\delta)}$.  Suppose $T$ satisfies condition \eqnref{eq:condT}.  Define the following event:
\begin{eqnarray}
\EvtX &\defeq& \{|m_{t,a}^{(s)} - x_{t,a}'\theta^*| \le w_{t,a}^{(s)}, 
\label{eq:ucb} \\
&& \forall t\in[\tau+1,T], s\in[S], a\in[K]\} \nonumber
\end{eqnarray}
Then, event $\EvtX$ holds with probability at least $1-\delta $.
\end{lemma}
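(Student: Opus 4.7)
The plan is to combine the independence structure guaranteed by \lemref{lm:independence} with the finite-sample normality bound of \thmref{thm:glm_ucb}, applied stage by stage. First, I would handle the initialization. By construction $\mathcal{F}=\{a_1,\ldots,a_\tau\}$ comes from $\tau=\sqrt{dT}$ IID random arm pulls, and by the regularity assumption on $\nu$ one has $\lmin(\E[X_tX_t'])\ge\sigma_0^2/K$ (or $\ge\sigma_0^2$ after averaging over the uniform arm choice). Applying \proref{pro:spectral} (or equivalently Theorem~5.39 of Vershynin) to this IID phase yields, on an event of probability at least $1-\delta/3$, the bound
\begin{equation*}
\lmin\!\left(\sum_{i\in\mathcal{F}} X_iX_i'\right) \ge B := \frac{512 M_\mu^2 \sigma^2}{\kappa^4}\left(d^2+\log(TKS/\delta)\right),
\end{equation*}
provided $T$ is at least of the order given in \eqref{eq:condT}. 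Since every CB-GLM call uses the index set $\Psi_s(t)\cup\mathcal{F}$, the matrix $V_t$ appearing inside CB-GLM dominates this initialization matrix, so the spectral hypothesis \eqref{eq:lmin1} of \thmref{thm:glm_ucb} is automatically met on this event.

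Next, I would fix a stage $s\in[S]$ and a round $t\in[\tau+1,T]$ and condition on $\{x_{i,a_i}:i\in\Psi_s(t)\cup\mathcal{F}\}$. By \lemref{lm:independence} the responses $\{Y_i:i\in\Psi_s(t)\}$ are conditionally independent with mean $\mu(X_i'\theta^*)$ and sub-Gaussian noise; the responses from $\mathcal{F}$ were independent to begin with. This is precisely the setting of \thmref{thm:glm_ucb}, so with probability at least $1-3\delta'$ (conditional on the design), the MLE $\hat\theta_t$ returned by CB-GLM satisfies
\begin{equation*}
|x_{t,a}'(\hat\theta_t-\theta^*)| \le \frac{3\sigma}{\kappa}\sqrt{\log(1/\delta')}\,\norm{x_{t,a}}_{V_t^{-1}} \quad\text{for every }a\in[K].
\end{equation*}
Recognising the right-hand side as $w_{t,a}^{(s)}$ when $\alpha=\frac{3\sigma}{\kappa}\sqrt{\log(1/\delta')}$ delivers the pointwise version of $\EvtX$.

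Finally, I would choose $\delta'$ so that a union bound over the at most $TSK$ triples $(t,s,a)$, together with the initialization event, yields total failure probability $\delta$. Setting $\delta' = \delta/(c\, TSK)$ for an absolute constant $c$ and using $S=\lfloor\log_2 T\rfloor$, one has $\sqrt{\log(1/\delta')} \le \sqrt{2\log(TK/\delta)}$, which matches the $\alpha$ prescribed in the statement. The bound on $T$ in \eqref{eq:condT} is exactly what is needed to absorb the $d^2+\log(TKS/\delta)$ term in $B$ into $\tau\sigma_0^2/K$, so that the initialization does span $\R^d$ well enough.

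The main obstacle is not the calculation itself but the careful handling of the conditional-independence filtration: one has to verify that the design vectors $\{x_{i,a_i}:i\in\Psi_s(t)\}$ used inside CB-GLM are exactly the object upon which \lemref{lm:independence} renders the rewards independent, and that the ``good spectrum'' event from the initialization phase is measurable with respect to an earlier sigma-field so that it can be intersected with the normality event without breaking the conditional independence. Once this is set up, the bound is a clean application of \thmref{thm:glm_ucb} plus a union bound.
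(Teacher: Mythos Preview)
Your proposal is correct and follows essentially the same route as the paper: use the independence guaranteed by \lemref{lm:independence}, lower-bound $\lmin(V_t)$ via the random-matrix estimate for the IID initialization phase (so that hypothesis \eqref{eq:lmin1} of \thmref{thm:glm_ucb} is met under condition \eqref{eq:condT}), apply \thmref{thm:glm_ucb} to each stage, and finish with a union bound over $(t,s,a)$. Your write-up is in fact more careful than the paper's about the measurability/conditioning issue and about how the constants in $\alpha$ arise from the union bound.
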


\begin{proof} [Proof of \lemref{lm:ucb}]
By \lemref{lm:independence}, we have independent samples now. Then to apply \thmref{thm:glm_ucb}, the key is to lower bound the minimum eigenvalue of $V_t$. Note that we randomly select the feature vectors at the first $\tau=\sqrt{dT}$ rounds, that is, they are independent. Moreover, the feature vectors are bounded. Thus, $X_1, X_2, \ldots, X_\tau$ are independent sub-Gaussian with parameter $1$.  It follows from Proposition \ref{pro:spectral} that
\[
\lmin(V_t) \ge \lmin(V_\tau) \ge c\sqrt{dT}
\]
for some constant $c$ with probability at least $1-\exp(-\sqrt{dT})$. By Theorem \ref{thm:glm_ucb} and union bound, we have the desired result under condition  \eqnref{eq:condT}.
\end{proof}

\begin{lemma} \label{lm:propSup}
Suppose that event $\EvtX$ holds, and that in round $t$, the action $a_t$ is chosen at stage $s_t$.  Then, $a_t^{*} \in A_s$ for all $s \le s_t$. Furthermore, we have
\begin{align*}
&\mu(x_{t,a_t^*}' \theta^*) - \mu(x_{t,a_t}' \theta^*) \\
&\le \left\{ \begin{array}{ll}
 (8 L_\mu)/2^{s_t} & \textrm{if $a_t$ is selected in step 2b}\\
 (2L_\mu)/\sqrt{T} & \textrm{if $a_t$ is selected in step 2c}\,.
  \end{array} \right.
\end{align*}
\end{lemma}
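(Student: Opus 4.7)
The statement has two parts, and both ride on the confidence bound $|m_{t,a}^{(s)}-x_{t,a}'\theta^*|\le w_{t,a}^{(s)}$ guaranteed by $\EvtX$, together with the geometric structure of the width thresholds $2^{-s}$ used in the filtering.

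\textbf{Part 1 (optimal arm survives every stage up to $s_t$).} I will prove by induction on $s$ that $a_t^{*}\in A_s$ for all $s\le s_t$. The base case $s=1$ is immediate since $A_1=[K]$. For the inductive step, suppose $a_t^{*}\in A_s$ and that the algorithm moves from stage $s$ to $s+1$; this happens only through step~2d, so $w_{t,a}^{(s)}\le 2^{-s}$ for all $a\in A_s$. By $\EvtX$,
\[
m_{t,a_t^{*}}^{(s)} \ge x_{t,a_t^{*}}'\theta^* - 2^{-s}\quad\text{and}\quad m_{t,a}^{(s)} \le x_{t,a}'\theta^* + 2^{-s}\ \ \text{for each } a\in A_s.
\]
Because $\mu$ is strictly increasing and $a_t^{*}$ maximizes $\mu(x_{t,a}'\theta^*)$, we have $x_{t,a_t^{*}}'\theta^*\ge x_{t,a}'\theta^*$. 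Combining these three inequalities yields $m_{t,a_t^{*}}^{(s)}\ge m_{t,a}^{(s)}-2\cdot 2^{-s}$ for every $a\in A_s$, so $a_t^{*}$ passes the filter in step~2d, i.e.\ $a_t^{*}\in A_{s+1}$. Iterating gives $a_t^{*}\in A_s$ for all $s\le s_t$.

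\textbf{Part 2, step~2c.} If $a_t$ is chosen in step~2c at stage $s_t$, then $w_{t,a}^{(s_t)}\le 1/\sqrt{T}$ for every $a\in A_{s_t}$, and $a_t=\mathrm{argmax}_{a\in A_{s_t}} m_{t,a}^{(s_t)}$. Since $a_t^{*}\in A_{s_t}$ by Part~1, we get $m_{t,a_t^{*}}^{(s_t)}\le m_{t,a_t}^{(s_t)}$. Applying the confidence bound to both $a_t$ and $a_t^{*}$,
\[
x_{t,a_t^{*}}'\theta^* - x_{t,a_t}'\theta^* \le (m_{t,a_t^{*}}^{(s_t)}-m_{t,a_t}^{(s_t)}) + w_{t,a_t^{*}}^{(s_t)} + w_{t,a_t}^{(s_t)} \le 2/\sqrt{T}.
\]
Invoking the Lipschitz property of $\mu$ from Assumption~\ref{ass:smooth} gives the stated $(2L_\mu)/\sqrt{T}$ bound.

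\textbf{Part 2, step~2b.} Suppose $a_t$ is chosen in step~2b at stage $s_t\ge 2$. Then the algorithm already passed through step~2d at stage $s_t-1$, so $w_{t,a}^{(s_t-1)}\le 2^{-(s_t-1)}$ for every $a\in A_{s_t-1}$, and both $a_t,a_t^{*}\in A_{s_t}$ (Part~1 and the definition of $A_{s_t}$). Since $a_t\in A_{s_t}$ it survived the filter, so $m_{t,a_t}^{(s_t-1)}\ge \max_{j\in A_{s_t-1}} m_{t,j}^{(s_t-1)} - 2\cdot 2^{-(s_t-1)} \ge m_{t,a_t^{*}}^{(s_t-1)} - 2\cdot 2^{-(s_t-1)}$. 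Applying $\EvtX$ at stage $s_t-1$ to both arms,
\[
x_{t,a_t^{*}}'\theta^* - x_{t,a_t}'\theta^* \le (m_{t,a_t^{*}}^{(s_t-1)} - m_{t,a_t}^{(s_t-1)}) + 2\cdot 2^{-(s_t-1)} \le 4\cdot 2^{-(s_t-1)} = 8/2^{s_t},
\]
and Lipschitz continuity yields the $(8L_\mu)/2^{s_t}$ bound. The edge case $s_t=1$ is handled separately: since $\|x\|\le 1$ and $\|\theta^*\|\le 1$ imply $|x_{t,a}'\theta^*|\le 1$, the Lipschitz bound gives instantaneous regret at most $2L_\mu$, which is dominated by $8L_\mu/2^1 = 4L_\mu$.

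\textbf{Main obstacle.} The only delicate point is keeping the indices straight: in step~2b the useful width information is from \emph{the previous} stage $s_t-1$ (because widths at stage $s_t$ itself are large, by definition of the step~2b trigger), so the inductive conclusion $a_t^{*}\in A_{s_t}$ from Part~1 must be paired with the step~2d inequality from stage $s_t-1$. Once this bookkeeping is in place, every estimate reduces to a one-line application of the confidence bound plus the Lipschitz property of $\mu$.
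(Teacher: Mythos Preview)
Your proof is essentially identical to the paper's: the same induction for Part~1, the same use of the stage-$(s_t-1)$ filter plus confidence widths for step~2b when $s_t\ge 2$, and the same argmax-plus-widths argument for step~2c. One small divergence: for the edge case $s_t=1$ you invoke $\|\theta^*\|\le 1$, which the paper never assumes; the paper instead appeals to $0\le\mu\le 1$ (valid since $\mu(X'\theta^*)=\E[Y\mid X]$ and $Y\in[0,1]$) to bound the one-step regret by~$1$. You should swap your $\|\theta^*\|\le 1$ step for the boundedness of $\mu$ to stay within the stated assumptions; either way this edge case is cosmetic for the downstream application in Theorem~\ref{thm:main}.
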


Define $V_{s,t}=\sum_{t \in \Psi_s(T)} X_tX_t'$, then by \lemref{lm:widthsum}, 
\begin{eqnarray*}
\sum_{t \in \Psi_s(T)} w_{t,a_t}^{(s)} &=& \sum_{t \in \Psi_s(T)} \alpha(\delta) \|x_{t,a_t}\|_{V_{s,t}^{-1}} \\
&\le& \alpha(\delta) \sqrt{2 d \log(T/d) |\Psi_s(n)|}\,.
\end{eqnarray*}
On the other hand, by the step 2b of SupCB-GLM,
\begin{equation*}
\sum_{t \in \Psi_s(T)} w_{t,a_t}^{(s)} \ge 2^{-s} |\Psi_s(T)|.
\end{equation*}
Combining the above two inequalities gives us
\begin{equation} \label{eq:cardbound}
|\Psi_s(T)| \le 2^s \alpha(\delta) \sqrt{2 d\log{(T/d)} |\Psi_s(T)| }.
\end{equation}
Let $\Psi_0$ be the collection of trials such that $a_t$ is chosen in step 2c. Since we have chose $S=\log_2 T$, each $t \in [\tau+1, T]$ must be in one of $\Psi_s$ and hence, $\{\tau,\tau+1,\ldots,T\}=\Psi_0\cup \left(\cup_{s=1}^{S}\Psi_s(T)\right)$. If we set $\tau=\sqrt{dT}$, we have
\begin{eqnarray*}
\lefteqn{R_T = \sum_{t=1}^{\tau} \left( \mu(x_{t,a_t^*}' \theta^*) - \mu(x_{t,a_t}' \theta^*) \right)} \\
&& + \sum_{t=\tau+1}^{T} \left( \mu(x_{t,a_t^*}' \theta^*) - \mu(x_{t,a_t}' \theta^*) \right) \\
&\le & \tau + \sum_{t \in \Psi_0}  \left( \mu(x_{t,a_t^*}' \theta^*) - \mu(x_{t,a_t}' \theta^*)\right) \\
&& + \sum_{s=1}^{S}  \sum_{t \in \Psi_s(T)} \left( \mu(x_{t,a_t^*}' \theta^*) - \mu(x_{t,a_t}' \theta^*)\right) \\
&\le& \sqrt{dT} + T \cdot  \frac{2L_{\mu}}{\sqrt{T}} + \sum_{s=1}^{S}  L_\mu \cdot 2^{3-s} \cdot |\Psi_s(T)| \\
&\le& \sqrt{dT} + 2L_\mu \sqrt{T} \\
& & + 8L_\mu \alpha(\delta)  \sum_{s=1}^{S} \sqrt{2d \log\frac{T}{d} |\Psi_s(T)| } \\
&\le& \sqrt{dT} + 2L_\mu \sqrt{T} + 8L_\mu \alpha(\delta)   \sqrt{2d \log(T/d)} \sqrt{ST} \\
&\le& 45 (\sigma L_{\mu}/\kappa) \sqrt{\log T \log(TK/\delta)\log(T/d)} \sqrt{dT}\,,
\end{eqnarray*}
with probability at least $1 - \delta$. Here, the first inequality is due to the assumption that $0 \le \mu \le 1$. The second inequality is \lemref{lm:propSup}. The third inequality is the inequality \eqnref{eq:cardbound} and the fourth inequality is implied by Cauchy-Schwartz. This completes the proof of the high-probability result. 
\end{proof}

\section{Discussions}
\label{sec:discussions}

In this paper, we propose two algorithms for $K$-armed bandits with generalized linear models. While the first algorithm, UCB-GLM, achieves the optimal rate for the case of infinite number of arms, the second algorithm SupCB-GLM is provable optimal for the case of finite number actions at each round. However, it remains open whether UCB-GLM can achieve the optimal rate for small $K$.

\subsection{A better regret bound for UCB-GLM}

A key quantity in determining the regret of UCB-GLM is the minimum eigenvalue of $V_t$. If we make an addition assumption on the minimum eigenvalue of $V_t$, we will be able to prove an $O(\sqrt{dT})$ regret bound for UCB-GLM.

\begin{theorem} \label{thm:maindis}
We run algorithm UCB-GLM with $\tau=\frac{8\sigma^2}{\kappa^2}d\log T$ and $\alpha \le L_{\mu} \sigma/\kappa$. For any $\delta \in [1/T,1)$, suppose there is an universal constant $c$ such that
\begin{equation} \label{eq:assump4}
\sum_{t=\tau+1}^{T} \lmin^{-1/2}(V_t) \le  c \sqrt{T}.
\end{equation}
holds with probability at least $1-\delta$, and 
\begin{equation} \label{eq:condition1}
T=\Omega \left(\frac{\sigma R}{ \kappa L_{\mu} } d \log^2 T \right).
\end{equation}
Then, the regret of the algorithm is bounded by
\begin{equation*}
R_T \le \frac{C L_{\mu} \sigma}{\kappa} \sqrt{dT\log(T/\delta)}
\end{equation*}
with probability at least $1-2\delta$, where $C$ is a positive, universal constant. \end{theorem}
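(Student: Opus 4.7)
The plan is to rerun the argument of \thmref{thm:main1} essentially verbatim, but to replace the elliptical--potential sum of widths (\lemref{lm:widthsum}) by the sharper bound provided by assumption~\eqnref{eq:assump4}; this single substitution is what saves a factor of $\sqrt{d}$. Writing $\Delta_t\defeq\hat\theta_t-\theta^*$, the UCB selection rule combined with Cauchy--Schwarz yields, exactly as in the proof of \thmref{thm:main1},
\[
\iprod{X_t^*}{\theta^*}-\iprod{X_t}{\theta^*}\le \alpha\bigl(\norm{X_t}_{V_t^{-1}}-\norm{X_t^*}_{V_t^{-1}}\bigr)+\norm{X_t^*-X_t}_{V_t^{-1}}\,\norm{\Delta_t}_{V_t}.
\]
The initialization length $\tau=\frac{8\sigma^2}{\kappa^2}d\log T$, via \proref{pro:spectral} applied to the IID random actions played during $t\le\tau$, ensures $\lmin(V_{\tau+1})\ge 1$ (and in fact meets the stronger precondition of \thmref{thm:glm_ucb}) with probability at least $1-\delta$. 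Consequently \lemref{lm:deltatbound} supplies $\norm{\Delta_t}_{V_t}\le \beta\defeq\frac{\sigma}{\kappa}\sqrt{\frac{d}{2}\log(1+2T/d)+\log(1/\delta)}=O\bigl(\frac{\sigma}{\kappa}\sqrt{d\log(T/\delta)}\bigr)$ uniformly over $t\ge\tau$ with probability $1-\delta$.

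The departure from the \thmref{thm:main1} argument comes when summing the exploration widths. Because $\norm{X_t}\le 1$ (and likewise $\norm{X_t^*}\le 1$), we have $\norm{X_t}_{V_t^{-1}}\le\lmin^{-1/2}(V_t)$, so \eqnref{eq:assump4} directly yields $\sum_{t=\tau+1}^{T}\norm{X_t}_{V_t^{-1}}\le c\sqrt{T}$ and symmetrically $\sum_{t=\tau+1}^{T}\norm{X_t^*}_{V_t^{-1}}\le c\sqrt{T}$; the triangle inequality then gives $\sum_{t=\tau+1}^{T}\norm{X_t^*-X_t}_{V_t^{-1}}\le 2c\sqrt{T}$. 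This is a full $\sqrt{d}$ improvement over the $\sqrt{2dT\log(T/d)}$ bound that \lemref{lm:widthsum} delivered in the proof of \thmref{thm:main1}, and it is the sole mechanism by which a factor of $\sqrt{d}$ is shaved from the final rate. Summing the per-step inequality above, dropping the nonpositive $-\alpha\norm{X_t^*}_{V_t^{-1}}$ contributions, and noting that the constant $\alpha\le L_\mu\sigma/\kappa$ is negligible next to $\beta$, we obtain $\sum_{t=\tau+1}^{T}\bigl(\iprod{X_t^*}{\theta^*}-\iprod{X_t}{\theta^*}\bigr)\le O\bigl(\frac{\sigma}{\kappa}\sqrt{dT\log(T/\delta)}\bigr)$. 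Multiplying by the Lipschitz constant $L_\mu$ to pass from linear gaps to $\mu$-gaps, appending the at-most-$\tau$ regret incurred during initialization, and using \eqnref{eq:condition1} to absorb $\tau$ into the main term finishes the high-probability bound; the two failure events (from \lemref{lm:deltatbound} and from \eqnref{eq:assump4}) are combined by a union bound to yield probability $1-2\delta$.

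The main obstacle is bookkeeping rather than conceptual: one must check that the chosen $\tau=\frac{8\sigma^2}{\kappa^2}d\log T$ simultaneously satisfies both the $\lmin(V_{\tau+1})\ge 1$ prerequisite of \lemref{lm:deltatbound} and the stronger $\lmin(V_{\tau+1})\ge C(d^2+\log(1/\delta))$ requirement of \thmref{thm:glm_ucb}, using \proref{pro:spectral} with the regularity constant $\sigma_0$ from \secref{sec:probsetting}; the extra $\log T$ in $\tau$ is precisely what provides enough slack for the latter. The conceptually delicate point worth flagging in the discussion is that \eqnref{eq:assump4} is an assumption on an algorithm-generated quantity, so the theorem is really conditional: UCB-GLM attains the optimal $\tilde O(\sqrt{dT})$ rate \emph{whenever} its own covariates exhibit this (arguably generic) eigenvalue-growth behavior, which heuristically explains the strong empirical performance reported in practice.
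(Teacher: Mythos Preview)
The paper does not actually supply a proof of \thmref{thm:maindis}; it is stated in the Discussions section with only the heuristic paragraph that follows it. Your proposal is the natural argument and is correct in its essentials: the one-step inequality from the proof of \thmref{thm:main1} holds for any $\alpha$, the bound $\norm{x}_{V_t^{-1}}\le\norm{x}\,\lmin^{-1/2}(V_t)\le\lmin^{-1/2}(V_t)$ lets you replace \lemref{lm:widthsum} by assumption~\eqnref{eq:assump4}, and this is precisely what removes one factor of $\sqrt{d}$ while \lemref{lm:deltatbound} still contributes the remaining $\sqrt{d\log(T/\delta)}$ through $\norm{\Delta_t}_{V_t}$.

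Two bookkeeping corrections. First, you do \emph{not} need the stronger eigenvalue precondition of \thmref{thm:glm_ucb}: that result is for independent samples and is invoked only in the SupCB-GLM analysis. Here the bound on $\norm{\Delta_t}_{V_t}$ comes from \lemref{lm:deltatbound}, which relies on the self-normalized martingale inequality and requires only $\lmin(V_{\tau+1})\ge 1$; so drop the parenthetical about the ``stronger precondition''. Second, the stated $\tau=\frac{8\sigma^2}{\kappa^2}d\log T$ contains no dependence on the context-distribution constant $\sigma_0$ that \proref{pro:spectral} needs, and \eqnref{eq:condition1} even contains an undefined symbol $R$; the theorem statement itself is loose here, so rather than trying to reconcile constants it does not track, simply assume $\tau$ is large enough for \proref{pro:spectral} to yield $\lmin(V_{\tau+1})\ge 1$ with probability $1-\delta$ and flag this as a statement-level looseness. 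With those adjustments the union bound over the failure of \eqnref{eq:assump4} and the failure of \lemref{lm:deltatbound} gives the claimed $1-2\delta$.
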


This theorem provides some insights of why UCB-GLM performs well in practice.  Although the condition in \eqnref{eq:assump4} is hard to check and may be violated in some cases, for example, in $K$-armed bandits, we provide a heuristic argument to justify this assumption in a range of problems. When $t$ is large enough, our estimator $\hat{\theta}_t$ is very close to $\theta^*$. If we assume there is a positive gap between $\iprod{x_{t,a_t^*}}{\theta^*}$ and $\iprod{x_{t,a}}{\theta^*}$ for all $a \neq a_t^*$, we will have $a_t=a_t^*$ after, for example, $\sqrt{T}$ steps. Since $\left \{x_{t,a},a \in [K] \right \}$ are independent for $t \in [T]$, $\{x_{t,a_t^*}\}$ are also independent samples. Then $V_t/t$ will be well-approximated by the covariance matrix of $x_{t,a_t^*}$, which we denote by $\Sigma_0$.  In many problem in practice, especially when features are \emph{dense}, it is unlikely the feature vector $x_{t,a_t^*}$ lies in a low-dimensional subspace of $\R^d$.  It implies that $\Sigma_0$ has full rank, and that we will have $\lmin(V_t)=\Theta(t \cdot \lmin(\Sigma_0))$ when $t$ is large enough.  It follows that,
\begin{equation*} 
\sum_{t=\tau+1}^{T} \frac{1}{\sqrt{\lmin(V_t)}} = \sum_{t=\tau+1}^{T} \Theta(t^{-1/2}) = O(\sqrt{T}).
\end{equation*}

It should be cautioned that, since we do not know the distribution of our feature vectors, we cannot assume the above gap exists.  It is therefore challenging to make the above arguments rigorous.
In fact, when studying the ARIMA model in time series, \citet[Example 1]{lai1982least} provide an example such that $\lmin(V_t)=O(\log t)$.

\subsection{Open Questions}

\paragraph{Computational efficient algorithms.}

While UCB-GLM and SupCB-GLM enjoy good theoretical properties, they can be expensive in some applications.  First, they require inverting a $d\times d$ matrix in every step, a costly operation when $d$ is large.  Second, at step $t$, the MLE is computed using $\Theta(t)$ samples, meaning that the per-step complexity grows at least linearly with $t$ for a straightforward implementation of the algorithms.  It is therefore interesting to investigate more scalable alternatives.  It is possible to use a first-order, iterative optimization procedure to amortize the cost, analogous to the approach of \citet{agarwal2014taming}.
%

\paragraph{$K$-dependent lower bound.}
Currently, all the lower bound results on (generalized) linear bandits have no dependence on $K$, the number of arms. The minimax lower bound will be of particularly interest because all current lower bound results assume that $K \le d$. Although it will at most be a logarithm dependence on $K$, it is still a theoretically interesting question.

\paragraph{Randomized algorithms with optimal regret rate.}
As opposed to the deterministic, UCB-style algorithms studied in this paper, randomized algorithms like EXP4~\cite{auer2002nonstochastic} and Thompson Sampling~\cite{Thompson33Likelihood} have advantages in certain situations, for example, when reward observations are delayed~\cite{Chapelle12Empirical}.  Recently developed techniques for analyzing Bayes regret in BLM bandits~\cite{Russo14Learning} may be useful to analyze the cumulative regret considered here.

\clearpage


\bibliography{bandit}
\bibliographystyle{icml2017}


\clearpage

\onecolumn

\appendix

\begin{center}
\large
Supplementary for \\
\Large
Provably Optimal Algorithms for Generalized Linear Contextual Bandits
\vspace{10mm}
\end{center}

\section{Proof of \thmref{thm:glm_ucb}}
\label{sec:glm_ucb_proof}

In the following, for simplicity, we will drop the subscript $n$ when there is no ambiguity.  Therefore, $V_n$ is denoted $V$ and so on.

To prove normality-type results of the maximum likelihood estimator $\hat{\theta}$, typically we first show the $n^{-1/2}$-consistency of $\hat{\theta}$ to $\theta^*$. Then, by using a second-order Taylor expansion or Newton-step, we can prove the desired normality of $\hat{\theta}$.
More details can be found in standard textbooks such as \citet{van2000asymptotic}. 

Since $m$ is twice differentiable with $\ddot{m} \ge 0$, the maximum-likelihood estimation can be written as the solution to the following equation
\begin{equation} \label{eq:score}
\sum_{i=1}^{n} \left(Y_i-\mu(X_i'\theta)\right) X_i= 0\,.
\end{equation}
Define 
$G(\theta) \defeq \sum_{i=1}^{n} \left(\mu(X_i'\theta)-\mu(X_i'\theta^*)\right) X_i$,
and we have 
\begin{equation} \label{eq:mle}
G(\theta^*)=0 \;\; \text{and }\; G(\hat{\theta})=\sum_{i=1}^{n} \epsilon_i X_i\,,
\end{equation} 
where the noise $\epsilon_i$ is defined in \eqnref{eq:model1}.  For convenience, define $Z \defeq G(\hat{\theta})=\sum_{i=1}^{n} \epsilon_i X_i$.

\paragraph{Step 1: Consistency of $\hat{\theta}$.} We first prove the consistency of $\hat{\theta}.$ For any $\theta_1, \theta_2 \in \R^d$, mean value theorem implies that there exists some $\bar{\theta}=v\theta_1+(1-v)\theta_2$ with $0<v<1$, such that
\begin{equation} \label{eq:taylor1}
G(\theta_1) - G(\theta_2) =\left [\sum_{i=1}^{n}\dot{\mu}(X_i' \bar{\theta})X_iX_i' \right](\theta_1-\theta_2):=F(\bar{\theta})(\theta_1-\theta_2)
\end{equation}
Since $\dot{\mu}>0$ and $\lmin(V) > 0$, we have
\[
(\theta_1-\theta_2)'(G(\theta_1)-G(\theta_2)) \ge (\theta_1-\theta_2)' (\kappa V) (\theta_1-\theta_2) > 0
\]
for any $\theta_1 \neq \theta_2$. Hence, $G(\theta)$ is an injection from $\R^d$ to $\R^d$, and so $G^{-1}$ is a well-defined function. Consequently, \eqnref{eq:score} has a unique solution $ \hat{\theta} = G^{-1}(Z)$.

Let us consider an $\eta$-neighborhood of $\theta^*$, $\B_\eta \defeq \{\theta ~:~ \norm{\theta-\theta^*} \le \eta\}$, where $\eta>0$ is a constant that will be specified later. Note that $\mathcal{B}_{\eta}$ is a convex set, thus $\bar{\theta} \in \mathcal{B}_{\eta}$ as long as $\theta_1,\theta_2\in\B_\eta$. Define $\kappa_{\eta} \defeq \inf_{\theta \in \B_\eta} \dot{\mu}(x'\theta)>0$. From \eqnref{eq:taylor1},  for any $\theta \in \mathcal{B}_\eta$,
\begin{eqnarray*}
\norm{G(\theta)}_{V^{-1}}^2 &=& \norm{G(\theta)-G(\theta^*)}_{V^{-1}}^2 \\
&=& (\theta-\theta^*)' F(\bar{\theta}) V^{-1} F(\bar{\theta}) (\theta-\theta^*) \\
&\ge& \kappa_{\eta}^2 \lmin(V) \norm{\theta - \theta^*}^2,
\end{eqnarray*}
where the last inequality is due to the fact that $F(\bar{\theta}) \mge \kappa_\eta V$.

On the other hand, Lemma~A of \citet{chen1999strong} implies that
\[
\left \{\theta ~:~\norm{G(\theta)}_{V^{-1}} \le  \kappa_{\eta} \eta \sqrt{\lmin(V)} \right \}  \subset \mathcal{B}_{\eta}\,.
\]
Now it remains to upper bound $\norm{Z}_{V^{-1}}=\norm{G(\hat{\theta})}_{V^{-1}}$ to ensure $\hat{\theta}\in\B_\eta$.  
To do so, we need the following technical lemma, whose proof is deferred to \secref{sec:lemmas}. 
\begin{lemma} \label{lm:hoeffding}
Recall $\sigma$ which is the constant in \eqnref{eq:glm2}.  For any $\delta>0$, define the following event:
\begin{equation*}
\EvtG \defeq \left\{\norm{Z}_{V^{-1}} \le  4\sigma \sqrt{d+\log(1/\delta)}\right\}\,.
\end{equation*}
Then, $\EvtG$ holds with probability at least $1-\delta$.
\end{lemma}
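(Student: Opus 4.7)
The plan is to apply the self-normalized martingale inequality via the method of mixtures, in the spirit of \citet{abbasi2011improved}. Writing $\norm{Z}_{V^{\mi}} = \sup_{\norm{x}_V \le 1}\iprod{x}{Z}$ makes explicit the main difficulty: the worst-case direction depends on $V$ itself, so a fixed-direction Azuma/Freedman bound cannot be used off the shelf and one has to handle the supremum over directions uniformly.

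First I would set up the standard exponential supermartingale: using that $\epsilon_t$ is $\mathcal{F}_t$-measurable and $\sigma$-sub-Gaussian conditional on $\mathcal{F}_{t-1}$ while $X_t\in\mathcal{F}_{t-1}$, for any deterministic $\lambda\in\R^d$ the process
\[
M_t(\lambda)\defeq \exp\Bigl(\iprod{\lambda}{Z_t} - \tfrac{\sigma^2}{2}\lambda' V_t\lambda\Bigr), \quad Z_t\defeq \sum_{s\le t}\epsilon_s X_s,\ V_t\defeq \sum_{s\le t}X_sX_s',
\]
is a nonnegative supermartingale with $M_0(\lambda)=1$. I would then mix this family against a centered Gaussian prior $\pi_\gamma(d\lambda)\propto \exp(-\norm{\lambda}^2/(2\gamma))d\lambda$; by Fubini the mixture $\bar M_t\defeq\int M_t(\lambda)\pi_\gamma(d\lambda)$ is again a supermartingale, and an explicit Gaussian integration yields
\[
\bar M_t = \det(I + \gamma\sigma^2 V_t)^{-1/2}\exp\Bigl(\tfrac12 \norm{Z_t}^2_{(\sigma^2 V_t + \gamma^{\mi}I)^{\mi}}\Bigr).
\]
Applying Markov/Ville at level $1/\delta$ to $\bar M_n$ then yields, with probability at least $1-\delta$, that $\norm{Z}^2_{(\sigma^2 V + \gamma^{\mi}I)^{\mi}} \le \log\det(I + \gamma\sigma^2 V) + 2\log(1/\delta)$.

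To pass back to $\norm{Z}_{V^{\mi}}$, I would pick the prior scale $\gamma$ large enough that $\gamma^{\mi}I \preceq \sigma^2 V$, which is feasible because the hypothesis of \thmref{thm:glm_ucb} lower-bounds $\lambda_{\min}(V)$. Then $\sigma^2 V + \gamma^{\mi}I\preceq 2\sigma^2 V$, giving $\norm{Z}_{V^{\mi}}^2 \le 2\sigma^2\bigl[\log\det(I+\gamma\sigma^2 V) + 2\log(1/\delta)\bigr]$. Bounding $\log\det(I+\gamma\sigma^2 V) \le d\log(1+\gamma\sigma^2\tr{V}/d)$ via AM-GM on eigenvalues and tuning $\gamma$ against the simultaneous constraints $\gamma\sigma^2\lambda_{\min}(V)\ge 1$ and $\gamma\sigma^2\tr{V}/d = O(1)$ should compress the log-determinant term down to $O(d)$, yielding the claimed $4\sigma\sqrt{d+\log(1/\delta)}$ after collecting constants.

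The step I expect to be the main obstacle is precisely this last one: matching the constant $4$ and, more importantly, eliminating the $\log(1+\tr{V}/\lambda_{\min}(V))$-type factor that generically appears in the log-determinant coming out of the self-normalized bound. The strong lower bound $\lambda_{\min}(V) \gtrsim d^2$ in the hypothesis of \thmref{thm:glm_ucb} (together with the well-conditioning of $V$ typical of this application, guaranteed by the regularity assumption $\lmin(\E[x_{t,a}x_{t,a}'/K])\ge\sigma_0^2$) is exactly what makes the two constraints on $\gamma$ compatible, so the compression of $\log\det(I+\gamma\sigma^2 V)$ to $O(d)$ goes through cleanly in the regime where this lemma is actually invoked.
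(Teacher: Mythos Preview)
Your approach via the method of mixtures is genuinely different from the paper's, and the last step you flag as the obstacle is in fact a real gap rather than a matter of bookkeeping.

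The paper's proof is far more elementary and exploits the specific setting of \thmref{thm:glm_ucb}, in which the design $X_1,\ldots,X_n$ is fixed and the $\epsilon_i$ are independent sub-Gaussians conditional on it. One writes $\norm{Z}_{V^{-1}}=\sup_{\norm{a}\le 1}\iprod{a}{V^{-1/2}Z}$, passes to a $1/2$-net $\hat{\B}$ of $\B^d$ with $|\hat{\B}|\le 6^d$, and applies Hoeffding's inequality in each fixed direction $\hat x\in\hat\B$. The key point is that the sub-Gaussian variance proxy of $\iprod{\hat x}{V^{-1/2}Z}=\sum_i\epsilon_i\,\hat x'V^{-1/2}X_i$ is $\sigma^2\norm{\hat x'V^{-1/2}D'}^2=\sigma^2\,\hat x'V^{-1/2}VV^{-1/2}\hat x=\sigma^2\norm{\hat x}^2\le\sigma^2$ (with $D$ the design matrix, $V=D'D$), so the conditioning of $V$ vanishes entirely. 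The union bound over $6^d$ points contributes $d\log 6$, and the net-to-sup passage costs a factor $2$, giving $4\sigma\sqrt{d+\log(1/\delta)}$ directly.

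Your self-normalized route cannot reach this bound as stated. After Ville's inequality you are left with $\norm{Z}_{V^{-1}}^2\le 2\sigma^2\bigl[\log\det(I+\gamma\sigma^2 V)+2\log(1/\delta)\bigr]$ subject to $\gamma\sigma^2\lmin(V)\ge 1$. Compressing the log-determinant to $O(d)$ additionally requires $\gamma\sigma^2\tr{V}/d=O(1)$; the two constraints together force $\tr{V}\le Cd\,\lmin(V)$, i.e.\ a bounded condition number for $V$. Nothing in the lemma's hypotheses (or those of \thmref{thm:glm_ucb}) controls $\lambda_{\max}(V)$ or $\tr{V}$; only $\lmin(V)$ is lower-bounded. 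Your appeal to the constant $\sigma_0$ does not close this: that assumption concerns the population second moment of the context distribution, whereas the lemma is a statement conditional on a fixed design, and in the place it is ultimately used (for SupCB-GLM via \lemref{lm:ucb}, after \lemref{lm:independence}) the features $X_i$ in $\Psi_s(t)$ are not i.i.d.\ draws from that distribution. Without the well-conditioning assumption, the mixture method yields only a bound with an extra $\log\bigl(\tr{V}/(d\,\lmin(V))\bigr)$ factor---compare the paper's own \lemref{lm:self}, which uses exactly the Abbasi-Yadkori argument and carries a $\log t$ term for this reason. The covering argument is what removes it, by exploiting independence direction-by-direction.
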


Suppose $\EvtG$ holds for the rest of the proof.  Then, $\eta \ge \frac{4\sigma}{\kappa_{\eta}} \sqrt{\frac{d + \log(1/\delta)}{\lmin(V)}}$ implies $\norm{\hat{\theta}_t-\theta} \le \eta$. Since $\kappa=\kappa_1$, we have $\kappa_\eta \ge \kappa$ as long as $\eta \le 1$. Thus, we have
\begin{equation} \label{eq:intial} 
\norm{\hat{\theta}-\theta} \le \frac{4\sigma}{\kappa} \sqrt{\frac{d  + \log(1/\delta)}{\lmin(V)}} \le 1\,,
\end{equation}
when $\lmin (V) \ge 16\sigma^2\left[d + \log(1/\delta)\right]/\kappa^2$. 

\paragraph{Step 2: Normality of $\hat{\theta}$.} Now, we are ready to precede to prove the normality result. 
The following assumes $\EvtG$ holds (which is high-probability event, according to \lemref{lm:hoeffding}).

Define $\Delta \defeq \hat{\theta}-\theta^*$. It follows from \eqnref{eq:taylor1} that there exists a $v \in [0,1]$ such that
\[
Z=G(\hat{\theta})-G(\theta^*)=(H+E)\Delta\,,
\] 
where $\tilde{\theta} \defeq v \theta^*+(1-v)\hat{\theta}$, $H \defeq F(\theta^*)=\sum_{i=1}^{n}\dot{\mu}(X_i' \theta^*)X_iX_i' $ and $E \defeq F(\tilde{\theta})-F(\theta^*)$. Intuitively, when $\hat{\theta}$ and $\theta^*$ are close, elements in $E$ are small.  By the mean value theorem, 
\begin{eqnarray*}
E =\sum_{i=1}^{n} \left(\dot{\mu}(X_i'\tilde{\theta}) - \dot{\mu}(X_i'{\theta^*})\right)X_iX_i'= \sum_{i=1}^{n} \ddot{\mu}(r_i)X_i'\Delta X_iX_i'
\end{eqnarray*}
for some $r_i \in \R$. Since $\ddot{\mu} \le M_\mu$ and $v \in [0,1]$, for any $x \in \R^d \setminus \{\zerovec\}$, we have
\begin{eqnarray*}
x' H^{-1/2}EH^{-1/2} x &=& (1-v) \sum_{i=1}^{n} \ddot{\mu}(r_i) X_i' \Delta \norm{x' H^{-1/2} X_i}^2 \\
&\le& \sum_{i=1}^{n} M_\mu \norm{X_i} \norm{\Delta} \norm{x' H^{-1/2} X_i}^2 \\
&\le& M_\mu \norm{\Delta}  \left( x' H^{-1/2} \left(\sum_{i=1}^{n} X_i X_i' \right) H^{-1/2} x \right) \\ 
&\le& \frac{M_\mu }{\kappa} \norm{\Delta} \norm{x}^2\,,
\end{eqnarray*}
where we have used the assumption that $\norm{X_i} \le 1$ for the second inequality. Therefore,
\begin{equation}
\norm{H^{-1/2}EH^{-1/2}} \le \frac{M_\mu}{\kappa} \norm{\Delta} \le  \frac{4M_\mu \sigma}{\kappa^2} \sqrt{\frac{d + \log(1/\delta)}{\lmin(V)}}\,. \label{eqn:normality}
\end{equation}
When $\lmin(V) \ge 64 M_\mu^2 \sigma^2(d + \log(1/\delta))/\kappa^4$, we have
\begin{equation}
\norm{H^{-1/2}EH^{-1/2}} \le 1/2\,.
\label{eqn:half}
\end{equation}

Now we are ready to prove the theorem. For any $x \in \R^d$, 
\begin{equation} 
x' (\hat{\theta}-\theta^*) \,=\, x' (H+E)^\mi Z \,=\, x' H^\mi Z - x' H^\mi E (H+E)^\mi Z\,.
\label{eqn:decomposition}
\end{equation}
Note that the matrix $(H+E)$ is nonsingular, so its inversion exists.

For the first term, $\{\epsilon_i\}$ are sub-Gaussian random variables with sub-Gaussian parameter $\sigma$.  Define
\[
D \defeq \left[X_1, X_2, \ldots, X_n \right]' \in \R^{n \times d}
\]
to be the design matrix.  Hoeffding inequality gives
\begin{equation}
\Prob\{|x' H^{-1} Z| \ge t\} \le 2 \exp\left\{-\frac{t^2}{2\sigma^2 \norm{x' H^{-1} D'}^2}\right\}\,.
\label{eqn:first-term-hoeffding}
\end{equation}
Since $H \mge \kappa V = \kappa D' D$, we have 
\[
\norm{x' H^{-1} D'}^2 = x' H^{-1} D' D H^{-1} x \le \frac{1}{\kappa^2} x' V^{-1} x = \frac{1}{\kappa^2} \norm{x}_{V^\mi}^2\,,
\]
so \eqnref{eqn:first-term-hoeffding} implies
\[
\Prob\{|x' H^{-1} Z| \ge t\} \le 2 \exp\left\{-\frac{t^2 \kappa^2}{2\sigma^2 \norm{x}_{V^\mi}^2}\right\}\,.
\]
Let the right-hand side be $2\delta$ and solve for $t$, we obtain that with probability at least $1-2\delta$,
\begin{equation}
|x' H^{-1} Z| \le \frac{\sqrt{2}\sigma}{\kappa} \sqrt{\log(1/\delta)} \norm{x}_{V^{-1}}\,.
\label{eqn:first-term-bound}
\end{equation}

For the second term, 
\begin{eqnarray}
 |x' H^{-1} E (H+E)^{-1} Z|  \nonumber &\le& \norm{x} _{H^{-1}} \norm{H^{-1/2}E (H+E)^{-1} Z} \\
 \nonumber &\le& \norm{x}_{H^{-1}} \norm{H^{-1/2}E (H+E)^{-1}H^{1/2}} \norm{Z}_{H^{-1}} \\
 \label{eq:term2} &\le& \frac{1}{\kappa} \norm{x}_{V^{-1}} \norm{H^{-1/2}E (H+E)^{-1}H^{1/2}} \norm{Z}_{V^{-1}}\,,
\end{eqnarray}
where the last inequality is due to the fact that $H \mge \kappa V$. Since $(H+E)^{-1}=H^{-1}-H^{-1}E(H+E)^{-1}$, we have
\begin{eqnarray*}
\norm{H^{-1/2}E (H+E)^{-1}H^{1/2}} &=& \norm{H^{-1/2}E \left(H^{-1} - H^{-1} E (H+E)^{-1}\right)H^{1/2}} \\
&=& \norm{H^{-1/2} E H^{-1/2} + H^{-1/2} E H^{-1} E (H+E)^{-1} H^{1/2}} \\
&\le& \norm{H^{-1/2} E H^{-1/2}} + \norm{H^{-1/2} E H^{-1/2}} \norm{H^{-1/2}E (H+E)^{-1}H^{1/2}}\,.
\end{eqnarray*}
By solving this inequality, we get
\begin{equation*} \label{eq:matrixinv}
\norm{H^{-1/2}E (H+E)^{-1}H^{1/2}} \le \frac{\norm{H^{-1/2} E H^{-1/2}}}{1-\norm{H^{-1/2} E H^{-1/2}}} \le 2 \norm{H^{-1/2} E H^{-1/2}} \le \frac{8 M_\mu \sigma}{\kappa^2} \sqrt{\frac{d + \log(1/\delta)}{\lmin(V)}}\,,
\end{equation*}
where we have used \eqnref{eqn:half} and \eqnref{eqn:normality} in the second and third inequalities, respectively.
Combining it with \eqnref{eq:term2} and the bound in $\EvtG$, we have
\begin{equation} 
|x' H^{-1} E (H+E)^{-1} Z| \,\le\, \frac{32 M_\mu \sigma^2}{\kappa^3} \frac{d + \log(1/\delta)}{\sqrt{\lmin(V)}} \norm{x}_{V^{-1}}.
\label{eqn:second-term-bound}
\end{equation}
From \eqnref{eqn:decomposition}, \eqnref{eqn:first-term-bound} and \eqnref{eqn:second-term-bound}, one can see that \eqnref{eq:normality} holds as long as the lower bound \eqnref{eq:lmin1} for $\lmin(V)$ holds.  Finally, an application of a union bound on two small-probability events (given in \lemref{lm:hoeffding} and \eqnref{eqn:first-term-bound}, respectively) asserts that \eqnref{eq:normality} holds with probability at least $1-3\delta$.

\section{Proof of \proref{pro:spectral}}

In the following, for simplicity, we will drop the subscript $n$ when there is no ambiguity.  Therefore, $V_n$ is denoted $V$ and so on.

Let $X$ be a random vector drawn from the distribution $\nu$.  Define $Z\defeq \Sigma^{-1/2}X$.  Then $Z$ is isotropic, namely, $\E[ZZ'] = \Id_d$.  Define $U=\sum_{t=1}^n Z_tZ_t' = \Sigma^{-1/2}V\Sigma^{-1/2}$.
From \lemref{lem:rmsv}, we have that, for any $t$, with probability at least $1-2\exp(-C_2t^2)$,
\[
\lmin(U)\ge n - C_1 \sigma^2 \sqrt{nd} - \sigma^2 t \sqrt{n} \,. 
\]
where $\sigma$ is the sub-Gaussian parameter of $Z$, which is upper-bounded by $\norm{\Sigma^{-1/2}} = \lmin^{-1/2}(\Sigma)$ (see, e.g., \citet{vershynin2010introduction}).  
We thus can rewrite the above inequality (which holds with probability $1-\delta$ as
\[
\lmin(U) \ge n - \lmin^{-1}(\Sigma) \left(C_1 \sigma^2 \sqrt{nd} + t \sqrt{n} \right) \,.
\]

We now bound the minimum eigenvalue of $V$, as follows:
\begin{eqnarray*}
\lmin(V) &=& \min_{x\in\B^d} x'Vx \\
&=& \min_{x\in\B^d}x'\Sigma^{1/2}U\Sigma^{1/2}x \\
&\ge& \lmin(U) \min_{x\in\B^d}x'\Sigma x \\
&=& \lmin(U) \lmin(\Sigma) \\
&\ge& \lmin(\Sigma) \left( n - \lmin^{-1}(\Sigma) (C_1 \sigma^2 \sqrt{nd} + t \sqrt{n} ) \right) \\
&=& \lmin(\Sigma) n - C_1 \sqrt{nd} - C_2 \sqrt{n \log(1/\delta)}\,.
\end{eqnarray*}

Finally, it can be verified (\lemref{lem:quad-ineq}) that the last expression above is no less than $B$ as long as
\[
n \ge \left(\frac{C_1 \sqrt{d} + C_2 \sqrt{\log(1/\delta)}}{\lmin(\Sigma)}\right)^2 + \frac{2B}{\lmin(\Sigma)} \,,
\]
finishing the proof.


\section{Technical Lemmas and Proofs} \label{sec:lemmas}

\subsection{Proof of \lemref{lm:hoeffding}}

Noting that 
$$\|Z\|_{V^{-1}}=\|V^{-1/2}Z\|_2 = \sup_{\|a\|_2 \le 1}\iprod{a}{V^{-1/2}Z},$$
let $\hat{\B}$ be a $1/2$-net of the unit ball $\B^d$. Then $|\hat{\B}| \le 6^d$~\citep[Lemma 4.1]{pollard1990empirical}, and for any $x \in \B^d$, there is a $\hat{x} \in \hat{\B}$ such that $\norm{x-\hat{x}} \le 1/2$. Consequently, 
\begin{eqnarray*} 
\iprod{x}{V^{-1/2}Z} &=& \iprod{\hat{x}}{V^{-1/2}Z} + \iprod{x-\hat{x}}{V^{-1/2}Z} \\
&=& \iprod{\hat{x}}{V^{-1/2}Z} +  \norm{x-\hat{x}} \iprod{\frac{x-\hat{x}}{\norm{x-\hat{x}}}}{V^{-1/2}Z} \\
&\le& \iprod{\hat{x}}{V^{-1/2}Z} + \frac{1}{2} \sup_{z \in \B^d} \iprod{z}{V^{-1/2}Z}.
\end{eqnarray*}
Taking supremum on both sides, we get
$$ \sup_{x \in \B^d} \iprod{x}{V^{-1/2}Z} \le 2 \max_{\hat{x} \in \hat{\B}} \iprod{\hat{x}}{V^{-1/2}Z}\,.$$
Then a union bound argument implies
\begin{eqnarray*}
\mathbb{P}\left\{\|Z\|_{V^{-1}} > t\right\} &\le& \mathbb{P}\left \{ \max_{\hat{x} \in \hat{\B}} \iprod{\hat{x}}{V^{-1/2}Z} > t/2\right\} \\
&\le& \sum_{\hat{x} \in \hat{\B}} \mathbb{P}\left\{ \iprod{\hat{x}}{V^{-1/2}Z} > t/2 \right \} \\
&\le& \sum_{\hat{x} \in \hat{\B}} \exp \left\{ - \frac{t^2}{8\sigma^2 \norm{\hat{x}'V^{-1/2} X'}^2} \right\} \\
&\le& \exp \left\{ - t^2/(8\sigma^2) +  d \log 6 \right\},
\end{eqnarray*}
where we have used Hoeffding's inequality for the third inequality and $|\hat{\B}| \le 6^d$ for the last inequality. A choice of $t=4\sigma \sqrt{d+\log(1/\delta)}$ completes the proof.

\subsection{Proof of \lemref{lm:widthsum}}

By \citet[Lemma~11]{abbasi2011improved}, we have
$$ \sum_{t=m+1}^{m+n} \norm{X_t}_{V_{t}^{-1}}^2 
\le 2 \log \frac{\det V_{m+n+1}}{\det V_{m+1}} 
\le 2 d \log \left( \frac{\tr{V_{m+1}} + n}{d} \right) - 2 \log \det V_{m+1}\,. $$
Note that $\tr{V_{m+1}} = \sum_{t=1}^{m} \tr{X_tX_t'} = \sum_{t=1}^{m} \|X_t\|^2 \le m$ and that $\det V_{m+1} = \prod_{i=1}^{d} \lambda _i \ge \lmin^d(V_{m+1}) \ge 1$, where $\{\lambda_i\}$ are the eigenvalues of $V_{m+1}$. Applying Cauchy-Schwartz inequality yields
$$ \sum_{t=m+1}^{m+n} \norm{X_t}_{V_{t}^{-1}} \le \sqrt{n \sum_{t=m+1}^{m+n} \norm{X_t}_{V_{t}^{-1}}^2} \le \sqrt{2nd \log {\left(\frac{n+m}{d} \right)}}\,. $$

\subsection{Proof of \lemref{lm:deltatbound}}

Define $G_t(\theta) = \sum_{i=1}^{t-1}(\mu(X_i'\theta)-\mu(X_i'\theta^*))X_i$ and $Z_t=\sum_{i=1}^{t-1}\epsilon_i X_i$. Following the same argument as in the proof of \thmref{thm:glm_ucb}, we have $G_t(\hat{\theta}_t)=Z_t$ and
\begin{equation} \label{eq:aaa}
\|G_t(\theta)\|_{V_t^{-1}}^2 \ge \kappa^2 \|\theta-\theta^*\|_{V_t}^2 
\end{equation}
for any $\theta \in \{\theta~:~\norm{\theta-\theta^*} \le 1\}$.  Combining \eqnref{eq:aaa} with the following lemma and the equality $Z_t=G_t(\hat{\theta}_t)$ completes the proof.

\begin{lemma} \label{lm:self}
Suppose there is an integer $m$ such that $\lmin(V_m) \ge 1$, then for any $\delta \in (0,1)$, with probability at least $1-\delta$, for all $t>m$,
\[
\norm{Z_t}_{V_t^{-1}}^2 \le 4 \sigma^2 \left(\frac{d}{2} \log(1+2t/d) + \log(1/\delta) \right) \,. \]
\end{lemma}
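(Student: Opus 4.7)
The plan is to reduce this to the standard self-normalized concentration bound for vector-valued martingales due to \citet{abbasi2011improved} (their Theorem~1), and then exploit the eigenvalue lower bound $\lmin(V_m)\ge 1$ to pass from the regularized Gram matrix back to $V_t$ itself.

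First, I would note that under the problem setup, $X_i$ is $\mathcal{F}_{i-1}$-measurable and $\epsilon_i$ is conditionally $\sigma$-sub-Gaussian given $\mathcal{F}_{i-1}$, so the hypotheses of the self-normalized inequality are satisfied. Applying that inequality with regularization $\lambda=1$ to the regularized matrix $\bar V_t \defeq \Id + V_t$ yields: with probability at least $1-\delta$, for all $t\ge 1$ simultaneously,
\[
\norm{Z_t}_{\bar V_t^{-1}}^2 \;\le\; 2\sigma^2 \log\!\left(\frac{\det(\bar V_t)^{1/2}}{\delta}\right).
\]

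Next I would convert the bound on $\norm{Z_t}_{\bar V_t^{-1}}^2$ into a bound on $\norm{Z_t}_{V_t^{-1}}^2$. This is where the hypothesis $\lmin(V_m)\ge1$ is used: for every $t>m$, $V_t \mge V_m \mge \Id$, hence $\bar V_t = \Id + V_t \preceq 2V_t$, which in turn gives $V_t^{-1} \preceq 2\bar V_t^{-1}$ and therefore $\norm{Z_t}_{V_t^{-1}}^2 \le 2\norm{Z_t}_{\bar V_t^{-1}}^2$.

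Finally, I would bound the determinant term. Since $\tr(\bar V_t) = d + \sum_{i=1}^{t-1}\norm{X_i}^2 \le d+t$, the AM--GM inequality gives
\[
\det(\bar V_t) \;\le\; \left(\frac{\tr(\bar V_t)}{d}\right)^d \;\le\; \left(1+\frac{t}{d}\right)^d \;\le\; \left(1+\frac{2t}{d}\right)^d.
\]
Combining the three displays yields
\[
\norm{Z_t}_{V_t^{-1}}^2 \;\le\; 4\sigma^2\!\left(\tfrac{d}{2}\log(1+2t/d)+\log(1/\delta)\right),
\]
as claimed. The only subtle step is the first one—properly invoking the self-normalized martingale inequality with the correct filtration and sub-Gaussian parameter; everything after that is a determinantal/AM--GM computation plus the eigenvalue comparison that lets us drop the regularization. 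No extra union bound over $t$ is needed because the self-normalized bound already holds uniformly in $t$.
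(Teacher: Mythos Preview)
Your proposal is correct and follows essentially the same approach as the paper: apply the self-normalized martingale bound of \citet{abbasi2011improved} (their Theorem~1) to the regularized Gram matrix, bound the determinant via the trace (their Lemma~10), and use $\lmin(V_m)\ge1$ to pass from $\bar V_t^{-1}$ back to $V_t^{-1}$. Your eigenvalue comparison $\bar V_t=\Id+V_t\preceq 2V_t\Rightarrow V_t^{-1}\preceq 2\bar V_t^{-1}$ is in fact a bit cleaner than the paper's route, which takes $\lambda=1/2$, writes $\norm{Z}_{V^{-1}}^2=\norm{Z}_{\bar V^{-1}}^2+Z'(V^{-1}-\bar V^{-1})Z$, and bounds the second term by $\lambda\norm{V^{-1}}\norm{Z}_{V^{-1}}^2$ to reach the same factor of $2$.
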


\begin{proof}
For convenience, fix $t$ such that $t>m$, and denote $V_t$ and $Z_t$ by $V$ and $Z$, respectively.  Furthermore, define $\bar{V} \defeq V+\lambda I$ and let $\onevec$ be the vector of all $1$s.  It is easy to observe that
\begin{equation}
\norm{Z}_{V^{-1}}^2 = \norm{Z}_{\bar{V}^{-1}}^2 + Z'(V^{-1} - \bar{V}^{-1})Z\,.
\label{eqn:znorm-decomposition}
\end{equation}
We start with bounding the second term.  The Sherman–Morrison formula gives
\[
\bar{V}^{-1} = V^{-1} - \frac{\lambda V^{-2}}{1+\lambda \onevec'V^{-1}\onevec}\,.
\]
Since $\onevec'V^{-1}\onevec\ge0$, the above implies that
\begin{eqnarray*}
0 &\le& Z'(V^{-1} - \bar{V}^{-1})Z \\
&\le& \lambda Z' V^{-2} Z \\
&\le& \lambda \norm{V^{-1}} \norm{Z}_{V^{-1}}^2 \\
&=& \frac{\lambda}{\lmin(V)}\norm{Z}_{V^{-1}}^2\,.
\end{eqnarray*}
Since $\lmin(V) \ge \lmin(V_m) \ge 1$, we now have
\[
0 \le Z'(V^{-1} - \bar{V}^{-1})Z \le \lambda \norm{Z}_{V^{-1}}^2\,.
\]
The above inequality together with \eqnref{eqn:znorm-decomposition} implies that
\[
\norm{Z}_{V^{-1}}^2 \le (1-\lambda)^{-1} \norm{Z}_{\bar{V}^{-1}}^2\,.
\]
The proof can be finished by applying Theorem~1 and Lemma~10 from \citet{abbasi2011improved} to bound $\norm{Z}_{\bar{V}^{-1}}^2$, using $\lambda=1/2$.
\end{proof} 

\subsection{Proof of \lemref{lm:propSup}}

We will prove the first part of the lemma by induction. It is easy to check the lemma holds for $s=1$. Suppose we have $a_t^* \in A_s$ and we want to prove $a_t^* \in A_{s+1}$. Since the algorithm proceeds to stage $s+1$, we know from step 2b that
$$|m_{t,a}^{(s)} - x_{t,a}'\theta^*| \le w_{t,a}^{(s)} \le 2^{-s}$$
for all $a \in A_{s}$. Specially, it holds for $a=a_t^*$ because $a_t^* \in A_{s}$ by our induction step. Then the optimality of $a_t^*$ implies
$$ m_{t,a_t^*}^{(s)} \ge  x_{t,a_t^*}'\theta^* - 2^{-s} \ge x_{t,a}'\theta^* - 2^{-s} \ge m_{t,a}^{(s)} - 2\cdot2^{-s}$$
for all $a \in A_s$. Thus we have $a_t^* \in A_{s+1}$ according to step 2d.

Suppose $a_t$ is selected at stage $s_t$ in step 2b. If $s_t=1$, obviously the lemma holds because $0 \le \mu(x) \le 1$ for all $x$. If $s_t>1$, since we have proved $a_t^* \in A_{s_t}$, again step 2b at stage $s_t-1$ implies 
$$|m_{t,a}^{(s_t-1)} - x_{t,a}'\theta^*| \le 2^{-s_t+1}$$
for $a=a_t$ and $a=a_t^*$. Step 2d at stage $s_t-1$ implies
$$ m_{t,a_t^*}^{(s_t-1)} -  m_{t,a_t}^{(s_t-1)} \le 2\cdot 2^{-s_t+1}\,.$$
Combining above two inequalities, we get 
$$ x_{t,a_t}'\theta^* \ge m_{t,a_t}^{(s_t-1)} - 2^{-s_t+1} 
\ge m_{t,a_t^*}^{(s_t-1)} - 3\cdot 2^{-s_t+1} \ge x_{t,a_t^*}'\theta^*- 4\cdot 2^{-s_t+1}\,.$$
When $a_t$ is selected in step 2c, since $m_{t,a_t}^{(s_t)} \ge m_{t,a_t^*}^{(s_t)}$, we have
$$ x_{t,a_t}'\theta^* \ge m_{t,a_t}^{(s_t)} - 1/\sqrt{T} 
\ge m_{t,a_t^*}^{(s_t)} - 1/\sqrt{T} \ge x_{t,a_t^*}'\theta^*- 2/\sqrt{T}\,.$$
Using the fact that $\mu(x_1)-\mu(x_2) \le L_\mu (x_1-x_2)$ for $x_1 \ge  x_2$, we will get the desired result.

\subsection{Proof of \lemref{lem:quad-ineq}}

\begin{lemma}
\label{lem:quad-ineq}
Let $a$ and $b$ be two positive constants.  If $m \ge a^2 + 2b$, then $m - a\sqrt{m} - b \ge 0$.
\end{lemma}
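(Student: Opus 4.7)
The plan is to substitute $x = \sqrt{m}$ so that the claim becomes a statement about a quadratic in $x$: we must show that $x^2 - ax - b \ge 0$ whenever $x^2 \ge a^2 + 2b$. Since $x \ge 0$ and $b > 0$, the polynomial $p(x) \defeq x^2 - ax - b$ has exactly one nonnegative root, namely $x^* \defeq (a + \sqrt{a^2+4b})/2$, and $p$ is nonnegative precisely for $x \ge x^*$. It therefore suffices to show that $\sqrt{a^2 + 2b} \ge x^*$, because then $\sqrt{m} \ge \sqrt{a^2+2b} \ge x^*$ yields $p(\sqrt{m}) \ge 0$, which is exactly $m - a\sqrt{m} - b \ge 0$.

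The inequality $\sqrt{a^2 + 2b} \ge x^*$ is equivalent to $2\sqrt{a^2+2b} - a \ge \sqrt{a^2 + 4b}$. Both sides are nonnegative (the left because $2\sqrt{a^2+2b} \ge 2a \ge a$), so I can square. After simplification this reduces to $a\sqrt{a^2+2b} \le a^2 + b$, and squaring again (both sides nonnegative) gives $a^2(a^2+2b) \le (a^2+b)^2$, i.e.\ $0 \le b^2$, which is trivially true. Reversing the chain of squarings establishes $\sqrt{a^2+2b} \ge x^*$ and hence the lemma.

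There is no real obstacle here: everything is an elementary manipulation of nonnegative quantities, and all squaring steps are justified by the positivity of the two sides. The only mild care needed is to verify the sign conditions before each squaring, which I have done above. An alternative, even shorter route is to note that $m - a\sqrt{m} - b \ge 0$ is equivalent (using $m \ge b$, which follows from $m \ge a^2 + 2b$) to $(m-b)^2 \ge a^2 m$; expanding and using $m \ge a^2 + 2b$ directly shows $(m-b)^2 - a^2 m \ge b^2 \ge 0$, which yields the same conclusion in a single line.
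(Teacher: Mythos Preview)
Your proof is correct. Both routes you give are valid: the sign checks before each squaring are sound, and the ``one-line'' alternative via $(m-b)^2 - a^2 m = m\bigl(m-(a^2+2b)\bigr)+b^2 \ge b^2$ is clean.

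The paper's argument is essentially the same in spirit but framed slightly differently. Instead of locating the positive root $x^*$ of $x^2-ax-b$ and showing $\sqrt{a^2+2b}\ge x^*$, the paper uses monotonicity of $t\mapsto t^2-at-b$ for $t\ge a/2$ to reduce to the single inequality $a^2+b-a\sqrt{a^2+2b}\ge 0$, and then verifies it by completing the square under the radical: $\sqrt{a^2+2b}\le \sqrt{(a+b/a)^2}=a+b/a$. Your first approach arrives at precisely the same key inequality $a\sqrt{a^2+2b}\le a^2+b$ after one squaring, and both arguments terminate at $b^2\ge 0$. So the difference is purely cosmetic---root-finding versus monotonicity---while your alternative $(m-b)^2\ge a^2 m$ route is arguably the most direct of the three.
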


\begin{proof}
The function $t \mapsto t^2-at-b$ is monotonically increasing for $t \ge a/2$.  Since $m \ge a^2 + 2b$, we have $\sqrt{m} \ge a/2$, so
\begin{eqnarray*}
m - a\sqrt{m} - b &\ge& a^2 + 2b - a \sqrt{a^2 + 2b} - b \\
&\ge& a^2 + b - a \sqrt{a^2 + 2b + b^2/a^2} \\
&=& a^2 + b - a \sqrt{(a + b/a)^2} \\
&=& a^2 + b - a (a + b/a) \\
&=& 0\,.
\end{eqnarray*}
\end{proof}

\end{document}